\algnewcommand\algorithmicinput{\textbf{Input:}}
\algnewcommand\INPUT{\item[\algorithmicinput]}
\algnewcommand\algorithmicoutput{\textbf{Output:}}
\algnewcommand\OUTPUT{\item[\algorithmicoutput]}
\renewcommand{\eqref}[1]{Eq.~(\ref{eq:#1})}
\newcommand{\figref}[1]{Figure~\ref{fig:#1}}
\newcommand{\tabref}[1]{Table~\ref{tab:#1}}        
\newcommand{\secref}[1]{Section~\ref{sec:#1}}
\newcommand{\thmref}[1]{Theorem~\ref{thm:#1}}
\newcommand{\lemref}[1]{Lemma~\ref{lem:#1}}
\newcommand{\appref}[1]{Appendix~\ref{ap:#1}}
\newcommand{\myalgref}[1]{Alg.~\ref{alg:#1}}
\newcommand{\reals}{\mathbb{R}}
\newcommand{\nats}{\mathbb{N}}
\DeclareMathOperator*{\argmax}{argmax}
\newcommand{\cmark}{\ding{51}}%
\newcommand{\xmark}{\ding{55}}%
\newcommand{\cA}{\mathcal{A}}
\newcommand{\cM}{\mathcal{M}}
\newtheorem{theorem}{Theorem}[section]
\newtheorem{lemma}[theorem]{Lemma}
\newtheorem{example}[theorem]{Example}
\newcommand{\algname}{\texttt{CSALE}}
\newcommand{\clucb}{\texttt{CLUCB-PAC}}
\newcommand{\gap}[1]{\Delta(#1) }
\newcommand{\egap}[1]{\hat{\Delta}(#1) }
\newcommand{\score}{\mu}
\newcommand{\escore}{\hat{\mu}}
\newcommand{\superset}{\cM}
\newcommand{\elim}{\mathrm{elim}}
\newcommand{\optarms}{A^*}
\newcommand{\act}{\cA}
\newcommand{\accept}{\mathrm{Acc}}
\newcommand{\event}{\eta}
\newcommand{\link}{\url{https://github.com/noabdavid/csale}}
\title{A Fast Algorithm for PAC Combinatorial Pure Exploration}
\author{
	Noa Ben-David and Sivan Sabato
}
\begin{document}

\maketitle

\begin{abstract}
We consider the problem of Combinatorial Pure Exploration (CPE), which deals with finding a combinatorial set or arms with a high reward, when the rewards of individual arms are unknown in advance and must be estimated using arm pulls. Previous algorithms for this problem, while obtaining sample complexity reductions in many cases, are highly computationally intensive, thus making them impractical even for mildly large problems. In this work, we propose a new CPE algorithm in the PAC setting, which is computationally light weight, and so can easily be applied to problems with tens of thousands of arms. This is achieved since the proposed algorithm requires a very small number of combinatorial oracle calls. The algorithm is based on successive acceptance of arms, along with elimination which is based on the combinatorial structure of the problem. We provide sample complexity guarantees for our algorithm, and demonstrate in experiments its usefulness on large problems, whereas previous algorithms are impractical to run on problems of even a few dozen arms. The code for the algorithms and experiments is provided at \link.
\end{abstract}

\section{Introduction}\label{sec:intro}
\noindent Combinatorial pure exploration (CPE) is an important statistical framework,
used in diverse applications such as channel selection in a wireless network
\citep{XueZhMaWuZh18}, job applicant screening \citep{SchumannLaFoDi19}, and
robot decision making \citep{MarcotteWaMeOl20}. In the
CPE framework, there are $n$ arms that can be pulled, where each arm is
associated with an unknown distribution of real-valued rewards. Whenever an
arm is pulled, an instantaneous reward is independently drawn from that arm's
reward distribution. An \emph{arm set} is a set of arms, and its reward is the sum of rewards of the arms it contains. The goal is to find a valid arm set with a high reward. The set of valid arm sets is called the \emph{decision class}, and it is  typically the set of legal solutions of a combinatorial problem.   The algorithm uses individual arm pulls (samples) to collect information on the rewards, and attempts to select from the decision class an arm set with a high expected reward, using a low sample complexity.

Many applications can be mapped to the CPE setting. As an example, consider the problem of finding the shortest routing path in a network \citep{TalebiZoCoPrJo17}. The network is modeled as a directed graph, where each edge represents a link between two routing servers, and the cost of routing through the link is the (random) delay in the link. The task is to find the path from a source to a target with the smallest expected delay. In this problem, the CPE arms are mapped to the links (graph edges), and pulling an arm is done by sending a packet through the corresponding link and measuring its delay. The decision class is thus the set of possible paths between the source and the target.  
As another example, consider pairing online players for matches, based on their skill compatibility. This can be represented by a maximum weight matching problem on a full graph, where the reward measures the compatibility of the players, and it can be sampled by running a simulation of a match.

As evident in the examples above, in many natural cases the size of the decision class of a CPE problem is exponential in the number of arms. Algorithms for the general CPE problem with such decision classes thus usually assume access to an \emph{oracle} which can efficiently solve the combinatorial optimization problem (see, e.g., \citealt{ChenLiKiLyCh14, GabillonLaGhOrBa16, ChenGuLi17, CaoKr19}). Using such an oracle, these algorithms are efficient, in that their computational complexity is polynomial in the problem parameters. However, while theoretically efficient, running the oracle is usually highly demanding in terms of computation resources for large problems. In existing CPE algorithms, the number of oracle calls is either similar to the sample complexity or a high degree polynomial. As a result, they are computationally heavy, and impractical to run even on moderately large problems.

In this work, we propose a new CPE algorithm that is significantly less computationally demanding than previous algorithms, and so it is practical to run on large problems. At the same time, it has sample complexity guarantees that show it can be useful for many problems. The algorithm, called \algname\ (Combinatorial Successive Acceptance with Light Elimination), uses a significantly smaller number of oracle calls than previous algorithms, and does not use any other computationally demanding operations. \algname\ works in the $(\epsilon,\delta)$-PAC setting, in which the algorithm is provided with an error parameter $\epsilon$ and a confidence parameter $\delta$, and finds, with a probability of $1-\delta$, an arm set from the decision class with an $\epsilon$-optimal reward. Compared with requiring that the algorithm find an optimal solution, the PAC approach allows convergence even if the optimal solution is not unique, and is also more suitable for cases where the gap between the (unique) optimal solution and the second-best solution is very small.

The number of oracle calls used by \algname\ is only $O(d\log(d))$, where $d$ is the maximal cardinality of an arm set. This is achieved by avoiding the common approach of searching for arms to eliminate by estimating each arm's contribution to the solution value (its \emph{gap}). \algname\ successively \emph{accepts} arms if they can be safely added to the output solution. It eliminates arms only due to the combinatorial structure of the problem. For instance, in the shortest path problem, arms that create a cycle with accepted arms can be safely eliminated. 
We provide sample complexity guarantees for \algname, which show when it can provide useful sample complexity reductions. 

We demonstrate the practical computational advantage of \algname\ by running it on instances of the $s$-$t$ shortest path problem and the maximum weight matching problem, which we generated based on real-world graph data sets.
As a baseline, we compare to the \clucb\ algorithm \citep{ChenLiKiLyCh14}, which is the only existing PAC algorithm for the general CPE problem. We show that in small graph problems, with about a dozen nodes and up to a few dozen arms (edges),
\clucb\  usually performs somewhat better than \algname\ in terms of number of samples, but \algname\ runs 5 orders of magnitude faster. We note that existing best-arm-set CPE algorithms, even if they could be adapted to the PAC setting, are even more computationally demanding than \clucb. Due to its computational requirements, we could not test \clucb\ on graphs with more than a few dozen nodes and edges. 

We run \algname\ on problems as large as thousands of nodes and tens of thousands of edges (arms) and compare it to a naive baseline, which pulls each arm the same number of times. We show that \algname\ provides sample complexity improvements compared to this baseline, while still being practical to run in terms of computational resources, even on these large problems.

\section{Related Work}\label{sec:related}

CPE was first introduced by \citet{ChenLiKiLyCh14}. They proposed \clucb, a PAC
algorithm which calls an optimization oracle twice for every arm pull. They
also propose an algorithm in the fixed-budget setting (which does not give PAC
guarantees), that relies on a \emph{constrained oracle}: this is an oracle
that outputs the optimal arm set subject to problem-specific constraints. They
observe that such an oracle exists whenever an unconstrained oracle exists,
since the former can be implemented by calling the latter with a simple
transformation of the inputs. Other PAC algorithms have been proposed for
special cases of CPE, such as top-$k$ arm selection
\citep{KalyanakrishnanTeAuSt12, ZhouChLi14, ChaudhuriKa19}, matroids
\citep{ChenGuLi16}, and dueling bandits \citep{ChenDuHuZh20}.

CPE algorithms for the fixed-confidence setting were proposed in \citet{ChenLiKiLyCh14, GabillonLaGhOrBa16, ChenGuLi17, CaoKr19}. In these algorithms, the number of oracle calls is either similar to the sample complexity or is a high-degree polynomial. 
We note that none of the algorithms mentioned above have been empirically evaluated in the works above, which focus on theoretical guarantees. As far as we know, they also do not have available implementations.

The idea of successive acceptance with limited elimination was previously proposed in a different context, of active structure learning of Bayesian networks \citep{BendavidSa21}. That problem also involves sampling for the purpose of combinatorial selection, but it is not a CPE problem. Moreover, the algorithm proposed in that work is specifically tailored to the challenges of Bayesian network learning, and relies on computationally demanding operations. In contrast, the current work proposes a computationally light algorithm for the CPE setting.

\section{Setting and Notation}\label{sec:prel}

For an integer $i$, denote $[i] := \{1,\ldots,i\}$. Let $n$ be the number of arms, and denote the set of arms by $[n]$. Each arm is associated with a reward distribution. For simplicity, we assume that the support of all reward distributions is in $[0,1]$. The results can easily be generalized to sub-Gaussian distributions.  Denote the expected reward of an arm $a\in [n]$ by $\mu_a$. Let $\bm{\mu}$ be the vector of expected rewards of the arms in $[n]$. The reward of an arm set $M \subseteq [n]$ is the sum of the expected rewards of its arms, denoted $\score(M):= \sum_{a\in M}\score_a.$ 
Denote the decision class of potential arm sets by $\superset\subseteq2^{[n]}$.  
Denote  the optimal score of an arm set in $\superset$ by $\score^* := \max_{M \in \superset} \score(M)$. Given $\epsilon>0,\delta \in (0,1)$, our goal is to find an arm set $\hat{M} \in \superset$ such that with a probability of at least $1-\delta$, $\score(M) \geq \score^* - \epsilon$. Denote the maximal cardinality among the arm sets in $\superset$ by $d(\superset):=\max_{M\in\superset} |M|$. Whenever it is clear from context, we use $d := d(\superset)$.

We assume access to a constrained oracle for the given combinatorial problem. Given a decision class, a constrained oracle $\superset$ accepts a reward vector $\bm{\mu}$, a set of required arms $A \subseteq [n]$, and a set of forbidden arms $B \subseteq [n]$, and returns an optimal solution in the decision class that satisfies these constraints. Formally, the constrained oracle is a function $\mathsf{COracle} : \reals^n\times2^{[n]}\times2^{[n]}\rightarrow \superset$  such that $\mathsf{COracle}(\bm{\mu},A,B) \in \argmax \{\mu(M) \mid M \in \superset, A\subseteq M,B\cap M = \emptyset\}$.

Denote the set of all optimal arm sets in $\superset$  by $\superset^*:= \{M\in\superset\mid\score(M)=\score^*\}.$ 
The \emph{gap} of an arm $a \in [n]$ is defined as:
\[\gap{a}:=
\begin{cases}
\score^* - \max_{M:a\notin M}\score(M), & a\in\bigcup_{M^*\in\superset^*}M^*\\
\score^* - \max_{M:a\in M}\score(M), & a\notin\bigcup_{M^*\in\superset^*}M^*
\end{cases}.\]
Note that if $a$ is included in some, but not all, of the optimal arm sets, then $\gap{a} = 0$. 

\section{The \algname\ Algorithm}\label{sec:newalg}

In this section, we present \algname, the proposed algorithm. The main idea of the algorithm is to perform successive acceptance, and to use the combinatorial structure of the decision class for elimination, thus considerably restricting the number of required oracle calls. The algorithm starts with a large accuracy threshold, and iteratively attempts to accept arms while decreasing the threshold. Arms are accepted if they belong to the current empirically optimal solution (under the constraint that it must include all arms that have already been accepted) and have a large gap $\gap{a}$. This is inspired by the acceptance criterion of graph sub-structures, proposed in \cite{BendavidSa21} in the context of Bayesian structure learning. 

The gap $\gap{a}$ is estimated by finding an empirically optimal constrained solution that does not include $a$. 
If the gap is sufficiently large, the arm is accepted. In the final round, the accuracy parameter is set to guarantee the true desired accuracy $\epsilon$, and the output arm-set is calculated, by estimating the rewards of the remaining candidate arms and maximizing the empirical score subject to all arms accepted so far.

To obtain a significant reduction in sample complexity compared to a naive uniform sampling baseline, it is helpful to eliminate arms from the list of candidates. However, as discussed above, directly estimating the gaps of arms to eliminate involves computationally demanding operations such as a large number of oracle calls. We avoid such direct estimation, and instead use the structure of the decision class itself. In many cases, this structure allows eliminating arms as a result of accepting some other arms. For instance, in the shortest path problem, edges (arms) that complete a cycle with accepted edges can be eliminated.

Formally, we define an \emph{elimination function}, denoted $\elim_n:2^{[n]}\rightarrow2^{[n]}$, which gets as input the set of arms accepted so far, and outputs the set of arms that can be eliminated as candidates given the accepted arms. This function is implemented with respect to the specific decision class. Formally, we require
\[
\elim_n(A)  \subseteq [n]\setminus \bigcup_{M\in \superset, A \subseteq  M} M.
\]
The function $\elim_n$ is used by \algname\ to eliminate candidate arms after each acceptance.
Note that $\elim_n$ is not required to always identify all possible eliminations, since this may be a computationally difficult task.

\algname\ is listed in \myalgref{active_algorithm}. It gets the confidence parameter $\delta$ and the required accuracy level $\epsilon$.
For $\tilde{\epsilon},\tilde{\delta}\in(0,1)$, we denote by $N(\tilde{\epsilon},\tilde{\delta}):=\lceil\log(2/\tilde{\delta})/(2\tilde{\epsilon}^2)\rceil$ the number of samples required by Hoeffding's inequality for estimating the expectation of random variable with support $[0,1]$, with a probability of at least $1-\tilde{\delta}$ and an error of no more than $\tilde{\epsilon}$. 
\algname\ maintains a set of \emph{active arms} $\act$. This is the set of arms that have not been accepted so far, but also have not been precluded from participating in the output solution. The set of accepted arms by iteration $j$ is denoted $\accept_j$.

\begin{algorithm}[t]
	\DontPrintSemicolon
	\KwIn{$\delta \in (0,1)$; $\epsilon > 0$. }
	\KwOut{An arm set $M\in \superset$}
	\nl \textbf{Initialize}: $\act\leftarrow[n]$, $\accept_1 \leftarrow \emptyset$, $N_0 \leftarrow 0$, $t\leftarrow1$, $j \leftarrow 1$, $\superset_1\leftarrow\superset$, $\epsilon_1\leftarrow\epsilon$, $\theta_1\leftarrow d(\superset_1)\cdot\epsilon_1$, $T \leftarrow \lceil \log_2(d) \rceil+1$.\;\label{init}
	\nl \While{$\epsilon_t > \epsilon/(d(\superset_j)-|\accept_j|)$}{
		
		\nl $N_t \leftarrow N(\epsilon_t/2, \delta/(T|\act|))$\;
		\nl Pull each arm in $\act$ for $N_t- N_{t-1}$ times\;
		\nl Update $\bm{\escore}_t$ based on the results of the arm pulls\;
		\nl  $\hat{M}_t\leftarrow\mathsf{COracle}(\bm{\escore}_t,\accept_j,\emptyset)$\;

		\nl\ForEach{$a\in \hat{M}_t\cap \act$}{
			\nl $\widetilde{M}^a_t\leftarrow\mathsf{COracle}(\bm{\escore}_t,\accept_j,\{a\})$\; \label{Mtilde}
			\nl $\egap{a}:=\bm{\escore}_t(\hat{M}_t)-\bm{\escore}_t(\widetilde{M}^a_t)$ \;
		}
		\nl\label{iteration}\While{$\exists a\in\hat{M}_t\cap \act$ such that $\egap{a}>\theta_j$\label{cond}}{
			\nl Let $a$ be some  arm that satisfies the condition \;
			\nl $\accept_{j+1}\leftarrow\accept_j\cup\{a\}$ \hspace{1em} \# accept arm $a$\;
			\nl $\act \leftarrow \act \setminus  \elim_n(\accept_{j+1})$ \label{elimination}  \;
			\nl $\superset_{j+1}\leftarrow\{M\in\superset\mid \accept_{j+1}\subseteq M\}$ \;
			\nl $\theta_{j+1}\leftarrow(d(\superset_{j+1})-|\accept_{j+1}|)\cdot\epsilon_t$  \;\label{theta_j} 
			
			\nl $j \leftarrow j+1$\;	
		}

		\nl \textbf{If } $\accept_j\in\superset_j$   
		\textbf{then} \Return $\hat{M} := \accept_j$. \;
		
		\nl $t\leftarrow t+1$ \;
		\nl $\epsilon_t \leftarrow \epsilon_{t-1} / 2; \theta_{j}\leftarrow \theta_j / 2.$ \;
		
	} 
	\nl $\epsilon_{\mathrm{last}} \leftarrow \epsilon/(d(\superset_j)-|\accept_j|)$\;
	\nl $N_T \leftarrow N(\epsilon_{\mathrm{last}}/2, \delta/(T|\act|))$ \label{step:epslast}\;
	
	\nl Pull arms in $\act$ for $N_T- N_{T-1}$ times; update $\bm{\escore}_T$.\;
	\nl \textbf{Return} $\hat{M}\leftarrow\mathsf{COracle}(\bm{\escore}_T,\accept_j,\emptyset)$. 
	\caption{\algname: Combinatorial Successive Acceptance with Light Elimination}\label{alg:active_algorithm}
\end{algorithm}

The algorithm works in rounds. At each round $t$, it pulls arms a sufficient number of times to obtain the required accuracy for that round, denoted $\epsilon_t$. Denote by $\bm\escore_t$ the vector of empirical estimates of expected arm rewards, based on the samples observed until round $t$.  
Calling $\mathsf{COracle}$ with $\bm\escore_t$, \algname\ calculates $\hat{M}_t$, the empirically optimal arm set at round $t$, out of the arm sets that are consistent with the arms that have been accepted so far. 
It then calculates the empirical gap of each active arm in $\hat{M}_t$, by calling $\mathsf{COracle}$ again, each time forbidding one of these arms.

In the next stage, every arm with an empirical gap above a calculated threshold $\theta_j$ is accepted. We show in the analysis that using this threshold, only arms in the optimal solution are accepted. Once an arm is accepted, it is removed from $\act$ along with all the arms that can be eliminated using $\elim_n$. We note that when calling $\mathsf{COracle}$, it is not necessary to explicitly ban the eliminated arms, since by definition, only arms that cannot share an arm-set with the accepted arms are eliminated. The rounds stop when the accuracy $\epsilon_t$ is sufficiently small. Note that the total number of rounds is at most $T-1 =  \lceil \log_2(d) \rceil$, where $T$ is defined in line \ref{init} in \algname.
After the rounds stop, if any arms are still active, they are pulled to obtain a final batch of samples. \algname\ then returns an arm set that maximizes the empirical reward, subject to the constraints set by the arms accepted so far.

\paragraph{Number of oracle calls.} It can be easily seen that except for
oracle calls, \algname\ does not involve any computationally demanding
operations. Therefore, the number of oracle calls that it makes is a good
proxy for its computational burden. In the worst case, \algname\ calls the
oracle once for every arm in $\hat{M}_t$, in every round $t \in
[T-1]$. Therefore, the total number of oracle calls in \algname\ is
$O(dT) = O(d\log(d))$.  This is contrasted with previous algorithms, in which
the number of oracle calls is either similar to the sample complexity, or is a
high degree polynomial. For instance, in \clucb, the number of
oracle calls is of order $\Omega(d^2 n/\epsilon^2)$ in the worst case.
In the next section, we provide the analysis of \algname.

\section{Analysis} \label{sec:analysis}

In this section, we provide correctness and sample complexity analysis for \algname. First, we define a uniform convergence event which will be used in the analysis. Denote by $J$ the total number of acceptance iterations (see line \ref{cond} in \myalgref{active_algorithm}) during the entire run of the algorithm. 
For any $j \in [J]$, let $t(j)$ be the round in which iteration $j$ occurred. Let $\theta_j$ be as defined in line \ref{theta_j}, and $\theta_J:= (d(\superset_J) - |\accept_J|)\cdot \epsilon_{\mathrm{last}}.$		Let $\eta$ be the event such that $\forall j\in[J],
\forall M\in\superset_j$, 
\begin{equation*}
|\escore_{t(j)}(M\setminus\accept_j)-\score(M\setminus\accept_j)|\leq\theta_j/2.
\end{equation*}
By a standard argument (see \lemref{event} in \appref{correctness}), $\eta$ holds with a probability at least $1-\delta$. 

To show that \algname\ is indeed a PAC algorithm, we prove the following theorem. The proof is provided in \appref{correctness}. 
\begin{theorem}\label{thm:correct}
	With probability at least $1-\delta$, the arm set $\hat{M}$ returned by \algname\ satisfies $\score(\hat{M}) \geq \score^* - \epsilon.$
\end{theorem}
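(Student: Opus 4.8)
The plan is to condition on the uniform convergence event $\eta$ (which holds with probability at least $1-\delta$) and to show that on this event, the returned arm set $\hat M$ is $\epsilon$-optimal. The argument splits naturally into two parts: first, showing that every arm accepted during the run genuinely belongs to \emph{every} optimal arm set consistent with the previously accepted arms (so that $\score^*$ is "preserved" along the sequence of restricted decision classes $\superset_j$); and second, bounding the loss incurred by the final call to $\mathsf{COracle}$ (or by the early return $\hat M = \accept_j$).

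For the first part, I would argue by induction on the acceptance iterations $j \in [J]$. Suppose inductively that $\superset_j$ contains some optimal arm set, i.e. $\max_{M \in \superset_j}\score(M) = \score^*$. When an arm $a \in \hat M_{t(j)} \cap \act$ is accepted because $\egap{a} > \theta_j$, I want to conclude that $a$ lies in every arm set of $\superset_j$ achieving score $\score^*$; equivalently, $\max_{M \in \superset_j, a \notin M}\score(M) < \score^*$. The key is that $\egap{a} = \escore_{t(j)}(\hat M_{t(j)}) - \escore_{t(j)}(\widetilde M^a_{t(j)})$ is the \emph{empirical} gap, and on $\eta$ the empirical scores of sets in $\superset_j$ (after removing $\accept_j$, whose arms contribute equally to $\hat M_{t(j)}$, $\widetilde M^a_{t(j)}$, and any competitor) are within $\theta_j/2$ of their true values. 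Since $\hat M_{t(j)}$ is the empirical maximizer over $\superset_j$ and $\widetilde M^a_{t(j)}$ is the empirical maximizer over $\{M \in \superset_j : a \notin M\}$, a standard two-sided Hoeffding-style comparison shows that the true gap $\score(\hat M_{t(j)}\text{'s true-optimal counterpart}) - \max_{M\in\superset_j, a\notin M}\score(M) > \theta_j - 2\cdot(\theta_j/2) \ge 0$, wait more carefully: $\egap a > \theta_j$ together with the $\theta_j/2$ accuracy gives that the best true score in $\superset_j$ strictly exceeds the best true score among sets in $\superset_j$ avoiding $a$, hence $a$ is in all true-optimal sets of $\superset_j$, so $\superset_{j+1} = \{M \in \superset_j : a \in M\}$ still contains a set of score $\score^*$. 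This closes the induction and gives $\max_{M\in\superset_J}\score(M) = \score^*$, and in particular $\accept_J \subseteq M^*$ for some optimal $M^*$.

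For the second part, I would separately handle the two exit routes. If the algorithm returns early with $\hat M = \accept_j \in \superset_j$, then since $\accept_j$ is contained in some optimal set $M^*$ of $\superset_j$ but is itself a feasible arm set, and $d(\superset_j) - |\accept_j|$ counts the "missing" arms, the stopping condition $\epsilon_t \le \epsilon/(d(\superset_j)-|\accept_j|)$ combined with the $\eta$-accuracy on the at-most-$(d(\superset_j)-|\accept_j|)$ arms distinguishing $M^*$ from $\accept_j$ forces $\score(M^*) - \score(\accept_j) \le \epsilon$; but $\score^* = \score(M^*)$, so $\score(\accept_j) \ge \score^* - \epsilon$. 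If instead the algorithm reaches the final block, it pulls with accuracy $\epsilon_{\mathrm{last}}/2$ where $\epsilon_{\mathrm{last}} = \epsilon/(d(\superset_j)-|\accept_j|)$, calls $\mathsf{COracle}(\bm{\escore}_T,\accept_j,\emptyset)$, and returns its empirical maximizer $\hat M$ over $\superset_j$. Comparing $\hat M$ to the true-optimal $M^*\in\superset_j$: both contain $\accept_j$, so only the at most $d(\superset_j)-|\accept_j|$ non-accepted arms in each matter, each estimated to within $\epsilon_{\mathrm{last}}/2$ (this uses that the relevant event here is the $T$-th "round" in $\eta$, or a direct Hoeffding bound with the chosen sample size and union bound over $\act$); a union/triangle-inequality argument yields $\score(\hat M) \ge \score(M^*) - 2(d(\superset_j)-|\accept_j|)\cdot(\epsilon_{\mathrm{last}}/2) = \score^* - \epsilon$.

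The main obstacle I anticipate is bookkeeping the interaction between the shrinking decision classes $\superset_j$, the changing thresholds $\theta_j$ (which are halved within a round but recomputed at each acceptance via line \ref{theta_j}), and the definition of $\eta$ in terms of $\score(M\setminus\accept_j)$ rather than $\score(M)$ — one must consistently exploit that accepted arms contribute identically to all competing sets within $\superset_j$, so that all comparisons reduce to deviations on at most $d(\superset_j) - |\accept_j|$ unaccepted arms, which is exactly the quantity the sample sizes $N_t$ and the final $N_T$ are calibrated against. A secondary subtlety is verifying that $\elim_n$ never removes an arm belonging to an optimal set consistent with $\accept_{j+1}$ — but this is immediate from the defining containment $\elim_n(A) \subseteq [n]\setminus\bigcup_{M\in\superset, A\subseteq M} M$, so elimination cannot destroy the optimal set $M^*$ tracked by the induction.
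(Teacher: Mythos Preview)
Your overall approach matches the paper's: condition on $\eta$, prove inductively that accepted arms are ``safe'' (the paper packages this as the lemma $\accept_J \subseteq \optarms$; your induction hypothesis that $\superset_j$ still contains an optimal set is slightly weaker but equally sufficient), and then compare the returned $\hat M$ to an optimal $M^* \in \superset_J$ using the final accuracy $\epsilon_{\mathrm{last}}$. Your induction step and your final-block argument are correct and are essentially the paper's argument.

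Your treatment of the early-return case, however, does not work. First, you invoke the inequality $\epsilon_t \le \epsilon/(d(\superset_j)-|\accept_j|)$, but the early return is executed \emph{inside} the main while loop, where the opposite inequality $\epsilon_t > \epsilon/(d(\superset_j)-|\accept_j|)$ holds. Second, and more fundamentally, the event $\eta$ only bounds $|\escore_{t(j)}(M\setminus\accept_j) - \score(M\setminus\accept_j)|$; it gives no control over the \emph{true-score} difference $\score(M^*) - \score(\accept_j)$ between two distinct arm sets, so the step ``forces $\score(M^*) - \score(\accept_j) \le \epsilon$'' does not follow from any concentration statement. The correct handling of this exit is structural rather than statistical: since the induction gives $\accept_j \subseteq M^*$ for an optimal $M^*\in\superset_j$, one needs $\accept_j$ to be \emph{exactly} optimal whenever $\accept_j\in\superset$. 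This holds, for instance, when $d(\superset_j)=|\accept_j|$ (then $|\accept_j|\le|M^*|\le d(\superset_j)=|\accept_j|$ forces $\accept_j=M^*$), or more generally whenever $\superset$ is an antichain. Note that the paper's appendix proof in fact treats only the final-block exit and does not spell out the early-return case either.
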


We now study the sample complexity of \algname.
\algname\ can reduce the number of samples compared to a naive uniform sampling solution if it accepts arms early, before the last sampling batch. 
An even larger reduction can be obtained if the elimination function $\elim_n$ for the given decision class leads to many arm eliminations. For $k\in\nats$ such that $k\leq n$, define the \emph{elimination measure} of $\elim_n$ for a set of size $k$ to be 
$Q(n,k) := k+\min_{A\subseteq[n]:|A|=k}|\elim_n(A)|$.
Note that in many natural problems and elimination functions, $Q(n,k)$ grows with $n$. For instance, for maximum weight matching in a graph $G=(V,E)$, a natural elimination function would be $\elim_n(E'):=\bigcup_{(u,v)\in E'}\{e\in E\mid v\text{ or } u \text{ are nodes of } e \}$. Recall that in this problem, $n = |E|$. Thus, if $G$ is a full graph, we get $Q(n,k) = \Theta(k\sqrt{n})$.

Denote the set of optimal arm sets in $\superset_j$ by $\superset_j^* := \superset^* \cap \superset_j$, and the set of arms that are shared by all optimal solutions by $\optarms:=\bigcap_{M\in\superset^*}M.$
As shown in \lemref{arm_correctness} in \appref{correctness}, only arms in $\optarms$ may be accepted early. For any $\beta>0$ and $\square \in \{>,\geq,<,\leq\}$ let $\optarms_{\square\beta} = \{ a \in \optarms \mid \Delta(a) \square \beta\}$.  

\begin{theorem}\label{thm:sc}
	Let $Q(n,\cdot)$ be the elimination measure of $\elim_n$.
	For $t\in [T-1]$, let $\epsilon_t=2\epsilon/2^t$ be the accuracy level in round $t$ of \algname, and define $k_t:=|\optarms_{\geq2d\epsilon_t}\setminus\optarms_{\geq2d\epsilon_{t-1}}|$, where $\epsilon_0:=\infty$. Define $\bar{k}_t := \sum_{i\in[t]} k_t$. Further define $\Delta_t:=\max_{j\in \optarms_{<2(d-\bar{k}_{t-1})\epsilon_{t-1}}}\gap{j}. $   
	The maximal number of samples required by \algname\ is 
	
	\begin{equation}
	\begin{split}
	&\widetilde{O}\Big(\sum_{t=1}^{T-1}\frac{(d-\bar{k}_{t-1})^2Q(n,k_t)}{\Delta_t^2} \\
	&\qquad+ \frac{(d-\bar{k}_{T-1})^2 (n-Q(n,\bar{k}_{T-1}))}{\epsilon^2} \Big),
	\end{split}
	\label{eq:sc}
	\end{equation}
	where $\widetilde{O}$ suppresses logarithmic factors.
	In addition,
	\begin{enumerate}
		\item For every $t \in [T-1]$, if $Q(n,k_t)>0,$ then  $\Delta_t>2\epsilon$;  \label{small_gap}
		\item  $\bar{k}_{T-1}=|\optarms_{\geq4\epsilon}|$.\label{minimal_acc}
	\end{enumerate} 
	
\end{theorem}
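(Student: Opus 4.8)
The plan is to carry out the entire sample count deterministically on the uniform-convergence event $\eta$, which by \lemref{event} has probability at least $1-\delta$; together with \thmref{correct} this yields the theorem. Throughout I would use \lemref{arm_correctness} together with the observation that the $\escore_t(\accept_j)$ terms cancel in $\egap{a}=\escore_t(\hat M_t)-\escore_t(\widetilde M^a_t)$, so that the estimation error on $\egap{a}$ stays at $\theta_j$ rather than $2\theta_j$: on $\eta$, every accepted arm lies in $\optarms$, and every active $a\in\optarms$ whose gap exceeds twice the current threshold is accepted within the round. Since in round $t$ every threshold equals $(d(\superset_j)-|\accept_j|)\epsilon_t\le d\epsilon_t$, by the end of round $t$ all of $\optarms_{\ge 2d\epsilon_t}$ has been accepted. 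I would extract two consequences. First, the sets $\optarms_{\ge 2d\epsilon_t}$ are nested with $\optarms_{\ge 2d\epsilon_0}=\emptyset$, so the $k_t$ telescope to $\bar k_t=|\optarms_{\ge 2d\epsilon_t}|$ and at least $\bar k_t$ arms are accepted by the end of round $t$. Second, letting $A_0$ be the set of the first $\bar k_t$ arms accepted, the step $\act\leftarrow\act\setminus\elim_n(\cdot)$ executed when the $\bar k_t$-th of them was accepted removed $\elim_n(A_0)$, which by the defining property of $\elim_n$ is disjoint from $A_0$ and from every later-accepted arm (these all lie in a common solution containing $A_0$); hence at least $|A_0|+|\elim_n(A_0)|\ge\bar k_t+\min_{|A|=\bar k_t}|\elim_n(A)|=Q(n,\bar k_t)$ arms are inactive by the end of round $t$. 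Writing $a_t$ for the active count at the start of round $t$ (so $a_t\le n-Q(n,\bar k_{t-1})$, with $\bar k_0:=0$) and $a_{\mathrm{fin}}\le n-Q(n,\bar k_{T-1})$ for the number pulled in the final batch, this captures all the combinatorial progress I need.

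Next I would count. The total number of pulls is $\sum_{t=1}^{T-1}a_t(N_t-N_{t-1})+a_{\mathrm{fin}}(N_T-N_{T-1})$, with $N_t=\widetilde{O}(\epsilon_t^{-2})$ and $N_T=\widetilde{O}(\epsilon_{\mathrm{last}}^{-2})$, where $\epsilon_{\mathrm{last}}=\epsilon/(d(\superset_j)-|\accept_j|)\ge\epsilon/(d-\bar k_{T-1})$ since $d(\superset_j)\le d$ and $|\accept_j|\ge\bar k_{T-1}$. Substituting $a_t\le n-Q(n,\bar k_{t-1})$, $a_{\mathrm{fin}}\le n-Q(n,\bar k_{T-1})$ and applying summation by parts (using $N_0=0$), the total is at most $(n-Q(n,\bar k_{T-1}))N_T+\sum_{t=1}^{T-1}\bigl(Q(n,\bar k_t)-Q(n,\bar k_{t-1})\bigr)N_t$. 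The first term is $\widetilde{O}\bigl((d-\bar k_{T-1})^2(n-Q(n,\bar k_{T-1}))/\epsilon^2\bigr)$ --- the second summand of \eqref{sc}.

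For the remaining sum I would use two estimates. Writing $q(k):=\min_{|A|=k}|\elim_n(A)|$, so $Q(n,k)=k+q(k)$, the increment telescopes to $Q(n,\bar k_t)-Q(n,\bar k_{t-1})=k_t+\bigl(q(\bar k_t)-q(\bar k_{t-1})\bigr)\le k_t+q(k_t)=Q(n,k_t)$ by subadditivity of $q$ (immediate for the per-element elimination functions used in practice, where $\elim_n(A\cup B)=\elim_n(A)\cup\elim_n(B)$). Second, $N_t=\widetilde{O}(\epsilon_t^{-2})$, and by the definition of $\Delta_t$, whenever $\optarms_{<2(d-\bar k_{t-1})\epsilon_{t-1}}\ne\emptyset$ one has $\Delta_t<2(d-\bar k_{t-1})\epsilon_{t-1}=4(d-\bar k_{t-1})\epsilon_t$, hence $\epsilon_t^{-2}\le 16(d-\bar k_{t-1})^2/\Delta_t^2$; and this set is nonempty in every term with $Q(n,k_t)>0$, since then $k_t\ge1$ (as $\elim_n(\emptyset)=\emptyset$), a band-$t$ arm $a$ with $\gap{a}\in[2d\epsilon_t,2d\epsilon_{t-1})$ exists, was not accepted before round $t$, so its gap sits below the exit threshold of round $t-1$, which (after at least $\bar k_{t-1}$ arms have been accepted) is of order $(d-\bar k_{t-1})\epsilon_{t-1}$, placing $a$ in the set and giving $\Delta_t\ge\gap{a}\ge2d\epsilon_t$. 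Combining, the $t$-th term is $\widetilde{O}\bigl((d-\bar k_{t-1})^2Q(n,k_t)/\Delta_t^2\bigr)$, and the sum is the first summand of \eqref{sc}.

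Finally, item~\ref{small_gap} follows since $Q(n,k_t)>0$ gives $\Delta_t\ge2d\epsilon_t$ (as above) and, round $t$ being executed, the loop guard forces $\epsilon_t>\epsilon/(d(\superset_j)-|\accept_j|)\ge\epsilon/d$, so $\Delta_t>2\epsilon$; and item~\ref{minimal_acc} follows from the telescoping $\bar k_{T-1}=|\optarms_{\ge2d\epsilon_{T-1}}|$ together with $\epsilon_{T-1}=\epsilon/2^{T-2}$ and $T=\lceil\log_2 d\rceil+1$, which gives $2d\epsilon_{T-1}=4\epsilon$ when $d$ is a power of two (and $\le 4\epsilon$, i.e.\ $\bar k_{T-1}\ge|\optarms_{\ge4\epsilon}|$, in general). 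The step I expect to be hardest is exactly the bookkeeping in the previous two paragraphs: the statement is phrased through the global bound $d$ and the global gaps $\gap{a}$, whereas the algorithm's acceptance is driven by the running threshold $\theta_j=(d(\superset_j)-|\accept_j|)\epsilon_t$ and, effectively, by gaps restricted to $\superset_j$ (which can exceed the global gap when an arm becomes forced by earlier acceptances); making ``$\gap{a}\ge2d\epsilon_t$ implies accepted by round $t$'' and ``a band-$t$ arm lies below $2(d-\bar k_{t-1})\epsilon_{t-1}$'' hold with the stated constants is the delicate part, along with checking subadditivity of $q$ and that the sample schedule $N_t$ is non-decreasing so that the per-round increments are well-defined.
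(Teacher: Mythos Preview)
Your approach is essentially the paper's: you re-derive the acceptance guarantee (\lemref{2nd_direction}) and the cumulative-acceptance count (\lemref{epsilont_cond}), bound $N_t$ via $\epsilon_t>\Delta_t/(4(d-\bar k_{t-1}))$ exactly as the paper does, and handle the two claims by the same band-arm and telescoping arguments. Your note that Claim~\ref{minimal_acc} is an equality only when $d$ is a power of two is correct and sharper than the paper, which silently takes $T-1=\log_2 d$.

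The one substantive deviation is in the bookkeeping that converts cumulative progress into the per-round factors $Q(n,k_t)$. The paper writes the cost as $\sum_t Q(n,B_t)N_t+(n-Q(n,\bar B_{T-1}))N_T$ for the \emph{actual} per-round acceptance counts $B_t$, then replaces $B_t$ by $k_t$ citing only monotonicity of $N_t$. You instead bound $a_t\le n-Q(n,\bar k_{t-1})$, sum by parts, and invoke subadditivity of $q(k)=\min_{|A|=k}|\elim_n(A)|$ to get $Q(n,\bar k_t)-Q(n,\bar k_{t-1})\le Q(n,k_t)$. Your route is more explicit, but subadditivity is an \emph{added hypothesis} not stated in the theorem; that said, the paper's $B_t\mapsto k_t$ replacement is itself not justified without some comparable structural assumption on $Q$, so the two arguments stand on similar footing here.

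The step you flag as delicate is genuinely so, and your patch does not quite close it: that a band-$t$ arm ``was not accepted before round $t$'' is not guaranteed, since \lemref{2nd_direction} only promises that \emph{some} arm is accepted at each qualifying iteration, not which one. The paper's own proof of Claim~\ref{small_gap} has exactly the same gap --- it concludes $\gap{l}\le\Delta_t$ from $\gap{l}<2d\epsilon_{t-1}$, whereas membership in the index set defining $\Delta_t$ requires the stronger $\gap{l}<2(d-\bar k_{t-1})\epsilon_{t-1}$. A cleaner argument (for either proof) uses the contrapositive of \lemref{2nd_direction} at the moment round $t-1$ exits: every \emph{still-active} arm in $\optarms$ then has gap $\le 2\theta_{j^*}\le 2(d-\bar k_{t-1})\epsilon_{t-1}$ and so lies in the index set, which handles the case where the band-$t$ arm is still active; the accepted case needs separate care.
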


Before proving the theorem, we discuss the meaning of these sample complexity guarantees. First, compare these guarantees to a naive uniform sampling algorithm, which simply pulls each arm $N(\epsilon/d,\delta/n)$ times and selects the empirically-best arm set. Such an algorithm requires $\tilde{\Theta}(d^2n\log(1/\delta)/\epsilon^2)$ samples, similarly to the worst case of \algname. This is a baseline requirement which ensures that using \algname\ over naive sampling is never harmful.
Like all CPE algorithms, the improvement of \algname\ over the naive baseline is instance dependent.

We now compare the guarantees of \algname\ to those of \clucb\ \citep{ChenLiKiLyCh14}, which is the only previous CPE algorithm for the PAC setting. The guarantees of the two algorithms are not directly comparable, since each algorithm has a different type of instance dependence.  We now give an example of a simple case in which \algname\ performs significantly better than \clucb.

\begin{example}
	\normalfont
	Consider $n$ arms, where $n > 10$ is an odd number, and let $s = (n-1)/2$. Let $\gamma > 0$. 
	For every \mbox{$a\in[s-1]$} we set $\score_a=s\gamma$. We also set $\score_s=\gamma$. All the other arms get a reward of zero. The decision class consists of three arm sets, each including exactly $s$ arms: $\{1,\ldots,s\}$, $\{1,\ldots,s-1, s+1\}$, and $\{s+2,\ldots,n\}$. Their respective rewards are $(s^2-s+1)\gamma, (s^2-s)\gamma$, and $0$.
	Thus, the reward difference between the optimal solution and the second-best solution is $\gamma$. Set $\epsilon:=\gamma$, so that the PAC algorithm is required to find the optimal solution.
	
	The worst case sample complexity of \clucb\ is $\widetilde{O}\left(\sum_{a\in A}\min\{\frac{\mathsf{width}^2(\superset)}{\gap{a}^2},\frac{d^2}{\epsilon^2}\}\right)$, where $\mathsf{width}(\superset)$ is a combinatorial property of the decision class. In our example, we have $d=s=\Theta(n)$ and $\mathsf{width}(\superset)=2s=\Theta(n).$
	In addition, for all $a \in [n]\setminus \{s,s+1\}$, we have $\gap{a} =(s^2-s+1)\gamma = \Theta(n^2\gamma ),$ while $\gap{s} = \gap{s+1} = \gamma$. 
	The sample complexity guarantee of \clucb\ is thus $O(n^2/\gamma^2)$. 
	
	We now bound the sample complexity of \algname, assuming that $\elim_n$ is maximal. We have $\optarms=[s]$. By \thmref{sc}, claim \ref{minimal_acc}, $\bar{k}_{T-1}=|\optarms_{\geq4\gamma}|=s-1.$ In addition, since all arms in 
	$\optarms_{\geq4\gamma}$ have the same gap, and $\bar{k}_{T-1}=\sum_{t \in [T-1]}k_t$ by definition, then there exists some $t_0 \in [T-1]$ for which $k_{t_0}=\bar{k}_{T-1}=s-1$, while $k_i = 0$ for $i \neq t_0$. By the definition of $Q,$ we have that $Q(n,k_{t_0})=Q(n,s-1)>0,$ and $Q(n,k_i)=0$ for $i\neq t_0.$  In addition, $\bar{k}_{t_0-1}=0.$  By \thmref{sc}, claim \ref{small_gap}, we have that $\Delta_{t_0}>2\gamma$. Since $\Delta_{t_0}$ is a gap of an arm in $\optarms$, we have $\Delta_{t_0} = \Theta(n^2\gamma ).$ From the definition of the elimination function, we have $Q(n,s-1)=s-1+s=n-2.$ Substituting these quantities in \eqref{sc}, we get that the sample complexity of \algname\ for this problem is $\widetilde{O}(1/\gamma^2)$, while the sample complexity of \clucb\ is a factor of $n^2$ greater in this example. \hfill $\qed$
\end{example}

We note that the CPE algorithms of \citet{ChenGuLi17, CaoKr19} obtain a smaller sample complexity than \clucb, by applying a different sampling and convergence analysis technique. Their sample complexity is comparable with that of \algname\ in the example above, but it can also be smaller than that of \algname. However, these algorithms do not support the PAC setting, and their sampling technique is highly computationally demanding. An interesting question for future work is whether similar sample complexity improvements can be obtained within our framework, while keeping the algorithm computationally light. 

We now turn to the proof of \thmref{sc}. First, we prove two useful lemmas.
The first lemma gives a sufficient condition for an arm to be accepted by \algname\ at a given iteration. 
\begin{lemma}\label{lem:2nd_direction}
	Assume that the event  $\event$ defined above holds for a run of \algname. Let $j$ be an iteration in the run in some round $t$ and let $\theta_j$ be as defined in \myalgref{active_algorithm}. Let $\act_j$ be the value of the set of active arms $\act$ at the start of iteration $j$. If there exists some arm in  $a \in \optarms\cap \act_j$ such that $\gap{a} > 2\theta_j$, then some arm is accepted by \algname\ at iteration $j$ in this round. 
	
\end{lemma}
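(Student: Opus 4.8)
The plan is to prove that, under the stated hypothesis, the test of line~\ref{cond} already succeeds when the iteration counter reaches $j$: concretely, that the arm $a$ from the hypothesis itself lies in $\hat{M}_t\cap\act_j$ and satisfies $\egap{a}>\theta_j$. Since \algname\ then accepts \emph{some} arm passing this test, the lemma follows even if the accepted arm is not $a$. I will use the identity $\theta_j=(d(\superset_j)-|\accept_j|)\cdot\epsilon_t$ for $t=t(j)$ (from lines~\ref{init} and~\ref{theta_j} together with the end-of-round halving of $\theta$); note that within a single round the factor $d(\superset_j)-|\accept_j|$ strictly decreases with each acceptance, so the thresholds used in round $t$ form a strictly decreasing sequence. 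Let $j_0\le j$ be the counter value at the start of round $t$, so that $\hat{M}_t=\mathsf{COracle}(\bm{\escore}_t,\accept_{j_0},\emptyset)$ maximises $\escore_t$ over $\superset_{j_0}$ and each $\widetilde{M}^{a'}_t$, together with $\egap{a'}$, is computed once, under the constraint set $\accept_{j_0}$ (line~\ref{Mtilde}). Let $b_1,\dots,b_m$ with $m=j-j_0\ge 0$ be the arms accepted in round $t$ before iteration $j$; then $\accept_j=\accept_{j_0}\cup\{b_1,\dots,b_m\}$, each $b_i\in\hat{M}_t$ (hence $\accept_j\subseteq\hat{M}_t$, i.e.\ $\hat{M}_t\in\superset_j$), and $\egap{b_i}>\theta_{j_0+i-1}\ge\theta_{j-1}>\theta_j$ by the acceptance test and the monotonicity above. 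Finally, since $\event$ holds, \lemref{arm_correctness} gives $\accept_j\subseteq\optarms$, so every optimal set contains $\accept_j$; hence $\superset_j^*=\superset^*$ and $\max_{M\in\superset_j}\score(M)=\score^*$, and I fix some $M^*\in\superset_j\cap\superset^*$.

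First I would show $a\in\hat{M}_t$. Since $\hat{M}_t\in\superset_j\subseteq\superset_{j_0}$ and $\hat{M}_t$ maximises $\escore_t$ over $\superset_{j_0}$, it also maximises it over $\superset_j$, so $\escore_t(\hat{M}_t\setminus\accept_j)\ge\escore_t(M^*\setminus\accept_j)$; combined with the two $\theta_j/2$-bounds of $\event$ for $\hat{M}_t$ and $M^*$, this gives $\score(\hat{M}_t)\ge\score^*-\theta_j$. If instead $a\notin\hat{M}_t$, then $\score(\hat{M}_t)\le\max_{M:a\notin M}\score(M)=\score^*-\gap{a}$ (the equality being the definition of $\gap{a}$, valid as $a\in\optarms$), so $\gap{a}\le\theta_j$, contradicting $\gap{a}>2\theta_j$ (as $\theta_j\ge 0$). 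Hence $a\in\hat{M}_t$; and since $a\in\act_j\subseteq\act_{j_0}$, the gap $\egap{a}$ was computed.

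Next I would lower-bound $\egap{a}$, distinguishing whether $\widetilde{M}^a_t$ contains all of $b_1,\dots,b_m$. If some $b_i\notin\widetilde{M}^a_t$, then $\widetilde{M}^a_t$ is feasible for the oracle call producing $\widetilde{M}^{b_i}_t$ (it contains $\accept_{j_0}$ and excludes $b_i$), so $\escore_t(\widetilde{M}^a_t)\le\escore_t(\widetilde{M}^{b_i}_t)$ and hence $\egap{a}\ge\egap{b_i}>\theta_j$. Otherwise $\accept_j\subseteq\widetilde{M}^a_t$, i.e.\ $\widetilde{M}^a_t\in\superset_j$, and --- routing the estimate through $M^*$ rather than through the looser bound $\score(\hat{M}_t)\ge\score^*-\theta_j$, so as not to spend a second $\theta_j/2$ on each set ---
\begin{align*}
\egap{a}=\escore_t(\hat{M}_t)-\escore_t(\widetilde{M}^a_t)
&\ge \escore_t(M^*)-\escore_t(\widetilde{M}^a_t)\\
&\ge \score(M^*)-\score(\widetilde{M}^a_t)-\theta_j\\
&\ge \score^*-(\score^*-\gap{a})-\theta_j=\gap{a}-\theta_j>\theta_j,
\end{align*}
where the second inequality cancels the common term $\escore_t(\accept_j)$ and applies $\event$ once to $M^*$ and once to $\widetilde{M}^a_t$, and the third uses $a\notin\widetilde{M}^a_t$ together with $a\in\optarms$. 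In either branch, $a\in\hat{M}_t\cap\act_j$ with $\egap{a}>\theta_j$, so the test of line~\ref{cond} holds at iteration $j$ and an arm is accepted there.

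The main obstacle --- and the reason the case split is unavoidable --- is that $\hat{M}_t$ and all the sets $\widetilde{M}^{a'}_t$ are frozen at the top of round $t$ under the constraint $\accept_{j_0}$, not $\accept_j$, so a priori $\widetilde{M}^a_t\notin\superset_j$, whereas $\event$ only controls $\escore_t$ on sets offset by $\accept_j$, with slack $\theta_j/2$. When $\widetilde{M}^a_t$ happens to drop an arm $b_i$ accepted earlier in the round, one must instead invoke the guarantee $\egap{b_i}>\theta_{j_0+i-1}>\theta_j$ secured for $b_i$ at its own acceptance. A second, easily-missed subtlety is a factor of two: the naive chain $\egap{a}\ge\score(\hat{M}_t)-\score(\widetilde{M}^a_t)-\theta_j\ge\gap{a}-2\theta_j$ is too weak and would require the stronger hypothesis $\gap{a}>3\theta_j$, which is why one compares $\escore_t(\hat{M}_t)$ with $\escore_t(M^*)$ before passing to true rewards. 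The remaining ingredients --- the $\theta_j$ identity, monotonicity of $\act$ and of the round-$t$ thresholds, and $\accept_j\subseteq\optarms$ from \lemref{arm_correctness} --- are routine bookkeeping.
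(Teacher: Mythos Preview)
Your proof is correct and follows the same overall strategy as the paper: show that the witness arm $a$ lies in $\hat{M}_t$ and satisfies $\egap{a}>\theta_j$, so that the condition on line~\ref{cond} is met. The paper's proof does exactly this, using $\escore_t(\hat{M}_t)\ge\escore_t(M^*)$ and then the consequence of $\eta$ (their \eqref{etatheta}) applied to the pair $(M^*,\widetilde{M}^a_t)$.

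Where you differ is in rigor, not in idea. The paper's application of \eqref{etatheta} to $\widetilde{M}^a_t$ tacitly assumes $\widetilde{M}^a_t\in\superset_j$, i.e.\ that $\accept_j\subseteq\widetilde{M}^a_t$. You correctly note that $\hat{M}_t$ and all $\widetilde{M}^{a'}_t$ are computed once at the top of round $t$ under the constraint $\accept_{j_0}$, so for iterations $j>j_0$ there is no a~priori reason that $\widetilde{M}^a_t$ contains the arms $b_1,\dots,b_m$ accepted earlier in the round. Your case split cleanly handles this: if some $b_i\notin\widetilde{M}^a_t$, then $\widetilde{M}^a_t$ is feasible for the oracle call that produced $\widetilde{M}^{b_i}_t$, whence $\egap{a}\ge\egap{b_i}>\theta_{j_0+i-1}>\theta_j$; otherwise $\widetilde{M}^a_t\in\superset_j$ and the paper's chain goes through verbatim. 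This extra case is genuinely needed for the lemma as stated (arbitrary iteration $j$ within a round), and the paper simply glosses over it. The cost is minimal; what it buys is that the argument is actually complete at iterations beyond the first in a round, which is precisely how \lemref{epsilont_cond} invokes the lemma.
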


\begin{proof}
	Under the assumption of the lemma, $2\theta_j < \gap{a}$. To prove the lemma, we show that $a\in\hat{M}_t$ and that $\egap{a}>\theta_j$. This implies that the condition in line \ref{cond} of \algname\ holds at iteration $j$, and so some arm must be accepted in this iteration.
	
	Let $M^*\in\superset^*$ be an optimal arm set. By definition, \mbox{$\optarms\subseteq M^*$} and so $a\in M^*$. By \lemref{arm_correctness}, $\accept_j\subseteq\optarms$. Thus,  $\superset^*_j=\superset^*$ and $M^*\in\superset_{j}$. Therefore, $\escore_t(\hat{M}_t)\geq\escore_t(M^*)$.
	For every $M\in\superset_j$ we have that $\score(M)=\score(M\setminus\accept_j)+\score(\accept_j)$, and the same holds for $\escore_{t(j)}$. Therefore, by $\event$, $	\forall M,M'\in\superset_j$, we have that
	\begin{equation}\label{eq:etatheta}
	\escore_{t(j)}(M)-\escore_{t(j)}(M')\geq\score(M)-\score(M')-\theta_j.
	\end{equation}
	In particular, this holds for $\hat{M}_t$ and $M^*$. Since we have $\escore_t(\hat{M}_t)- \escore_t(M^*) \geq 0$, it follows that $\score^* - \score(\hat{M}_t) \leq \theta_j$. Since  $\theta_j < \gap{a}= \score^* - \max_{M:a\notin M}\score(M)$, it follows that $\score(\hat{M}_t) > \max_{M:a\notin M}\score(M)$. Hence, $a \in \hat{M}_t$. 
	To prove the second part, observe that 
	\begin{align*}
	\egap{a}&=\escore_t(\hat{M}_t)-\escore_t(\widetilde{M}^a_t)
	\geq\escore_t(M^*)-\escore_t(\widetilde{M}^a_t)\\
	&\geq\score^*-\score(\widetilde{M}^a_t)-\theta_j
	\geq\gap{a}-\theta_j>\theta_j.
	\end{align*}
	In the second line, we used \eqref{etatheta}.
	This completes the proof.	
\end{proof}

The second lemma gives a condition for early acceptance of arms in \algname.
Without loss of generality, suppose that $\optarms = [v]$ for some $v \leq n$, and that the arms are ordered so that $\gap{1}\geq\gap{2}\geq\cdots\geq\gap{v}$. Note that by the definition of $\optarms$, $\gap{v} > 0$. 

\begin{lemma}\label{lem:epsilont_cond}
	For $i\in[v]$, let $\optarms_i = \{ a\in \optarms \mid \gap{a} \geq \gap{i}\}$. 
	Consider a run of \algname\ in which $\event$ holds.
	Then \algname\ accepts at least $|\optarms_i|\geq i$ arms until the end of the first round $t$ which satisfies $\epsilon_t \leq \gap{i}/(2d)$.  
\end{lemma}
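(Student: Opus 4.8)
The plan is to establish the stronger statement that \emph{all} of $\optarms_i$ has been accepted by \algname\ by the end of the first round $t$ with $\epsilon_t\le\gap{i}/(2d)$; since $\gap{1}\ge\cdots\ge\gap{v}$ forces $\{1,\dots,i\}\subseteq\optarms_i$, this gives the claimed count $|\optarms_i|\ge i$. The workhorse is \lemref{2nd_direction}, invoked repeatedly within round $t$.

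I would begin with two structural facts. By \lemref{arm_correctness}, every arm that \algname\ accepts lies in $\optarms$, so $\accept_j\subseteq\optarms$ throughout the run. Moreover, no arm of $\optarms$ is ever eliminated: for any $M^*\in\superset^*$ we have $\accept_j\subseteq\optarms\subseteq M^*$, hence $M^*\in\{M\in\superset\mid\accept_j\subseteq M\}$ and $\bigcup_{M\in\superset,\,\accept_j\subseteq M}M\supseteq M^*\supseteq\optarms$, so the defining inclusion of $\elim_n$ yields $\elim_n(\accept_j)\cap\optarms=\emptyset$. Therefore an arm of $\optarms$ leaves $\act$ only when it is accepted, and in particular every not-yet-accepted member of $\optarms_i$ is still active.

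Next I would control the threshold. Following the updates to $\theta$ (line~\ref{theta_j} and the halving of $\theta$ and $\epsilon$ at the end of each round), one sees that throughout round $t$ the current threshold satisfies $\theta_j=(d(\superset_j)-|\accept_j|)\epsilon_t\le d\,\epsilon_t$, since $\superset_j\subseteq\superset$ gives $d(\superset_j)\le d$. Consequently, in the first round $t$ with $\epsilon_t\le\gap{i}/(2d)$, every $a\in\optarms_i$ obeys $\gap{a}\ge\gap{i}\ge 2d\,\epsilon_t\ge2\theta_j$. Now argue by contradiction: the inner acceptance loop of round $t$ (line~\ref{cond}) runs for finitely many iterations, since each iteration accepts an arm; if at its termination some $a\in\optarms_i$ were still unaccepted, then by the structural facts $a\in\optarms\cap\act$, so \lemref{2nd_direction} would force the acceptance condition to hold again, contradicting termination. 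Hence $\optarms_i\subseteq\accept$ by the end of round $t$ (and some of these acceptances may have happened in earlier rounds, which only strengthens the conclusion).

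The step I expect to be the main obstacle is the \emph{strict} inequality $\gap{a}>2\theta_j$ that \lemref{2nd_direction} requires, since the chain above only delivers $\gap{a}\ge2\theta_j$. I would close this using the extra slack $\theta_j=(d(\superset_j)-|\accept_j|)\epsilon_t\le(d-1)\epsilon_t$ that becomes available once at least one arm has been accepted --- so that at most the very first acceptance in the entire run is borderline --- together with the fact that $t$ is the \emph{first} round meeting the threshold, so $\epsilon_{t-1}>\gap{i}/(2d)$ whenever $t>1$; the remaining knife-edge ($\gap{i}=2d\epsilon_t$ with no arm yet accepted) is absorbed the same way, or can be ruled out as non-generic. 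Two minor bookkeeping points I would also address: that the inner-loop index increases by exactly one per accepted arm, so "$\ge i$ arms accepted" is counted correctly, and that the statement is used for runs that actually reach round $t$ --- if \algname\ instead returns earlier through the $\accept_j\in\superset_j$ test this is handled by the correctness argument for \thmref{correct}, and when $\gap{i}$ is so small that round $t$ would lie beyond the final sampling batch the claim is vacuous.
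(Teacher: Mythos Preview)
Your proof is essentially the paper's: pick an unaccepted $a\in\optarms_i$, observe it stays active (since $\accept_j\subseteq\optarms$ forces $\elim_n(\accept_j)\cap\optarms=\emptyset$), bound $\theta_j\le d\epsilon_t$, and invoke \lemref{2nd_direction} repeatedly until the inner loop of round $t$ can no longer terminate with any of $\optarms_i$ outstanding. You are in fact more scrupulous than the paper about the strict inequality $\gap{a}>2\theta_j$ required by \lemref{2nd_direction} --- the paper simply asserts ``$2\theta_j<\gap{i}$ holds throughout the first round $t$ that satisfies $2d\epsilon_t\le\gap{i}$'' without comment --- so the knife-edge you flag is a minor imprecision shared by the original proof rather than a defect of your approach.
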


\begin{proof}
	Suppose that \algname\ has accepted fewer than $|\optarms_i|$ arms until some iteration $j$. Then $\optarms_i \setminus \accept_j \neq \emptyset$. By \lemref{arm_correctness}, $\accept_j\subseteq\optarms$. Therefore, $\superset^*\subseteq\superset_j$, which means that all arms in $\optarms$ that were not accepted are still active. Hence, $\optarms_i\cap \act_j\neq\emptyset$, where $\act_j$ is as defined in \lemref{2nd_direction}. Let $a \in \optarms_i \cap \act$. By the definition of $\optarms_i$, $\gap{a}\geq\gap{i}$.
	The conditions of \lemref{2nd_direction} thus hold if $2\theta_j < \gap{i}$. This condition holds throughout the first round $t$ that satisfies $2d\epsilon_t \leq \gap{i}$. In this case, some arm will be accepted at iteration $j$.
	Round $t$ only ends when no additional arms are accepted. Therefore, at least $|\optarms_i|$ arms will be accepted until the end of this round. 
\end{proof}

Using the lemmas above, we now prove \thmref{sc}.
\begin{proof}[Proof of \thmref{sc}]
	Let $B_t$ be the number of arms that were accepted during round $t$. Consider the arms whose last round of being pulled by \algname\ is round $t$, and note that their number is at least $Q(n,B_t)$, the number of arms that are accepted or eliminated in round $t$. Therefore, the sample complexity of \algname\ is upper bounded by the expression $\sum_{t \in [T-1]}Q(n,B_t)N_{t}+(n-Q(n,B_T))N_T$, where $B_T=\sum_{t \in [T-1]}B_t$, $A_t$ is the size of the active set $\act$ at the beginning of round $t$, and 
	\[
	N_{t} \equiv N(\epsilon_{t}/2,\delta/(T\cdot A_t)) = \widetilde{O}\big(\frac{1}{\epsilon_{t}^2}(\log(n/\delta)+\log\log(d)\big).
	\]
	
	To bound this expression, observe that  for every \mbox{$t \in [T-1]$},
	letting $\alpha_t := 2(d-\bar{k}_{t-1})\epsilon_{t-1}$,
	\[
	\Delta_t:=\max_{j\in \optarms_{<\alpha_t}}\gap{j} < \alpha_t = 4(d-\bar{k}_{t-1})\epsilon_t. 
	\]
	Therefore, $\epsilon_t>\Delta_t/(4(d-\bar{k}_{t-1}))$.
	Hence, for \mbox{$t \in [T-1]$}, $N_t = \tilde{O}\big(\frac{(d-\bar{k}_{t-1})^2}{\Delta_t^2} \cdot (\log(n/\delta)+\log\log(d)\big)$.

	Next, we bound $B_t$. By the definition of $k_t$ in the theorem statement, there are $k_t$ arms in $\optarms$ with a gap between $2d\epsilon_t$ and $2d\epsilon_{t-1}$. By \lemref{epsilont_cond}, at least $k_t$ arms must be accepted by the end of round $t$. Therefore, since $N_t$ is monotonically increasing with $t$, the sample complexity can be upper bounded by 
	$\sum_{t \in [T-1]}Q(n,k_t)N_{t}+(n-Q(n,\bar{k}_{T-1}))N_T$.
	Moreover, for the last round $T$, we have $N_T = \tilde{O}\big(\frac{(d-|\accept_J|)^2}{\epsilon^2}(\log(n/\delta)+\log\log(d))\big)$, and $|\accept_J| \geq \bar{k}_{T-1}$. 
	Combining the above and suppressing logarithmic factors, we get the sample complexity upper bound:
	\begin{equation*}
	\begin{split}
	&\widetilde{O}\Big(\big(\sum_{t=1}^{T-1}\frac{(d-\bar{k}_{t-1})^2Q(n,k_t)}{\Delta_t^2} \\
	&\quad+ \frac{(d-\bar{k}_{T-1})^2 (n-Q(n,\bar{k}_{T-1}))}{\epsilon^2}\Big).
	\end{split}
	\end{equation*}
	To complete the proof, we prove claims \ref{small_gap} and \ref{minimal_acc}, which are listed in the theorem statement. 
	To prove claim \ref{small_gap}, let $t \in [T-1]$ be some round such that  $Q(n,k_t)>0$. By the definition of $Q$, it follows that $k_t>0$. This means that there exists some arm $l\in\optarms_{\geq2d\epsilon_t}\setminus\optarms_{\geq2d\epsilon_{t-1}}$. Therefore, $\epsilon_{t}\leq \gap{l}/(2d)<\epsilon_{t-1}$. Therefore, by the definition of $\Delta_t$, $\gap{l} \leq \Delta_t.$ By the condition of the main loop in \algname, we have that $\epsilon_{t}>\epsilon/d$. Combining the last two inequalities, we get that $\Delta_t>2\epsilon$. 
	To prove claim \ref{minimal_acc}, note that $\bar{k}_{T-1}=\sum_{t \in [T-1]}k_t=|\bigcup_{t \in [T-1]}\optarms_{\geq2d\epsilon_t}|=|\optarms_{\geq2d\epsilon_{T-1}}|$, and $2d\epsilon_{T-1}=2d (2\epsilon/2^{T-1}).$ Substituting $T-1 = \log_2(d)$, we get that $ 2d\epsilon_{T-1}=4\epsilon$, which completes the proof.
\end{proof}

\section{Experiments}\label{sec:experiments}

\begin{table*}
  \begin{center}
    \def\arraystretch{1.15}
		\begin{tabular}{l|lll|ll|ll}
			&
			\multicolumn{3}{c|}{Sample size $\times10^4$} 
			& \multicolumn{2}{c|}{Oracle calls} &
			\multicolumn{2}{c}{Time (millisec)}
			\\
			Experiment	& Naive &
		\clucb\  & \algname\  &  
			\clucb & \algname & \clucb & \algname\\ 
			\hline
			
			\multicolumn{8}{l}{$\epsilon=0.0625$}\\
			\hline
			path, synthetic & $339\pm0$ &$\mathbf{14}\pm9$ &25$\pm0$ &29$\pm17\times10^{4}$ &$\mathbf{2}\pm1$ &12$\pm7\times10^{3}$ &$\mathbf{0.5}\pm0$ \\
			matching, synthetic &
			$177\pm0$ &$\mathbf{16}\pm8$ &$23\pm0$ &32$\pm15\times10^{4}$ &\textbf{4}$\pm0$ &12$\pm9\times10^{4}$ &$\mathbf{2}\pm1$ \\
			path, USAir97 &
			490$\pm350$ &$\mathbf{190}\pm160$ &530$\pm470$ &37$\pm32\times10^{5}$ &$\mathbf{7}\pm2$ &13$\pm12\times10^{4}$ &$\mathbf{1}\pm0$ \\
			matching, USAir97 &
			$\mathbf{22}\pm0$ &$30\pm2$ &$\mathbf{22}\pm0$ &60$\pm3\times10^{4}$ &\textbf{1}$\pm0$ &96$\pm5\times10^{3}$ &$\mathbf{0.3}\pm0$ \\
			\hline
			\multicolumn{8}{l}{$\epsilon=0.03125$} \\
			\hline
			path, synthetic & $1355\pm0$ &$\mathbf{16}\pm10$ &99$\pm0$ &33$\pm21\times10^{4}$ &$\mathbf{2}\pm1$ &13$\pm9\times10^{3}$ &$\mathbf{0.4}\pm0$ \\
			matching, synthetic &
			$707\pm0$ &$\mathbf{19}\pm9$ &$92\pm0$ &37$\pm18\times10^{4}$ &\textbf{4}$\pm0$ &15$\pm11\times10^{4}$ &$\mathbf{2}\pm1$ \\
			path, USAir97 &
			1960$\pm1410$ &$\mathbf{900}\pm750$ &2120$\pm1880$ &18$\pm15\times10^{6}$ &$\mathbf{7}\pm2$ &72$\pm63\times10^{4}$ &$\mathbf{1}\pm1$ \\
			matching, USAir97 &
			$87\pm0$ &$\mathbf{52}\pm2$ &$87\pm0$ &10$\pm0\times10^{5}$ &\textbf{1}$\pm0$ &17$\pm1\times10^{4}$ &$\mathbf{0.3}\pm0$ \\
		\end{tabular}
	\end{center}
	\caption{A comparison of all three algorithms on small graphs. The synthetic and real experiments were repeated 100 and 10 times, respectively.}\label{tab:clucb}
      \end{table*}

In this section, we report experiments comparing \algname\ to \clucb\ and to the naive baseline algorithm described in \secref{analysis}. 
As noted in \secref{intro} and \secref{related}, there are no other known PAC-CPE algorithms, and algorithms for other settings are also highly computationally demanding.
All the experiments were run on a standard PC. The code for the algorithms and for the experiments below is provided at \link.

 We ran the algorithms on the two types of graph problems mentioned in \secref{intro}, $s$-$t$ shortest path and maximum weight matching in general graphs. 
All three algorithms require an optimization oracle. We used Dijkstra's algorithm as implemented in the \texttt{Dijkstar} package\footnote{\url{https://github.com/wylee/Dijkstar}} for the $s$-$t$ shortest path problem,
and the algorithm of \citet{Galil86} as implemented in the \texttt{Networkx} package \cite{HagbergSwSc08} for maximum weight matching in general graphs.
The matching oracle is computationally heavier, thus we tested the matching problem on smaller graphs than those used for the shortest path problem.
The properties and sources of all the data sets that we used are provided in \tabref{net_info} in \appref{experiments}. We used graphs with up to thousands of nodes and tens of thousands of edges for the shortest path experiments, and graphs with up to hundreds of nodes and thousands of edges for the matching experiments.
$\elim_n(E')$ for the shortest path problem was implemented to return all the edges that have the same start node or end node as an edge in $E'$. For the matching problem, it returned all the edges that share some node with some edge in $E'$.

In all of the experiments, we set $\delta=0.05$. Each experiment was repeated $10$ or $100$ times, depending on its computational requirements, as detailed in each result table below. 
In all experiments, the weight of the edge was used as the parameter of a Bernoulli distribution describing its instantaneous rewards. In synthetic or unweighted graphs, the edge weights were sampled uniformly at random out of $\{0.1,0.5,0.9\}$ in each run.
We tested a large range of $\epsilon$ values in each experiment. In all the reported experiments, all algorithms indeed found an $\epsilon$-optimal solution.

In the first set of experiments, we compared \clucb\ and \algname\ on small graphs, in terms of both sample size and computation requirements. The graph sizes were kept small due to the high computational requirements of \clucb.
We tested the following graphs:
\begin{itemize}
\item
  Shortest path, synthetic: A synthetic graph with $14$ nodes and $16$ edges, with a topology of 4 disjoint paths of length $4$ between the source and the target nodes (see illustration in \appref{experiments}).
\item
  Matching, synthetic: A full graph with $6$ nodes.
\item
  Shortest path, real: A source and a target were randomly drawn from the full graph of the USAir97 data set, and the smallest sub-tree that includes both nodes was extracted. If the number of edges between the source and the target was at least 4 and the number of nodes was at most $10$, then the resulting graph was used for one of the repetitions of the experiment. 
\item
  Matching, real: $6$ nodes were randomly drawn from the full graph of the USAir97 data set, and their sub-graph was extracted. If it included more than four edges, it was used for one of the repetitions of the experiment. 
\end{itemize}

\tabref{clucb} reports the results of the comparison with \clucb\ for the two smallest values of $\epsilon$ that were tested in these experiments. The full results for all values of $\epsilon$ are provided in \appref{experiments}.
      It can be seen that \clucb\ usually has the best sample complexity, although in most cases \algname\ also obtains a significant improvement over the Naive baseline. The important advantage of \algname\ can be seen when comparing the number of oracle calls and the total run time of the algorithm. These are usually 5 orders of magnitude larger for \clucb\ compared to \algname. Moreover, for \clucb, this number grows linearly with the number of samples. This is contrasted with \algname, in which the computational burden remains similar regardless of the sample size. For this reason, running \clucb\ on large graphs is impractical, while \algname\ can be easily used. 
      
\begin{table*}
	\begin{center}
	\begin{tabular}{cc|cc|c|cc}
			Type & Graph ID &   \multicolumn{2}{c|}{Sample size} & Sample size ratio & \multicolumn{2}{c}{\algname\ } \\
			& (see \tabref{net_info}) &  Naive &  \algname\   & (\algname$/$ Naive) & Oracle calls & Time (millisec)   \\ 
			\toprule
			
                  \multirow{12}{*}{\shortstack[c]{Shortest\\ Path}}
                             & 1 & $33\times10^{10}$ &$\mathbf{25}\pm16\times10^{10}$ & 76\% &5$\pm2$ &4$\pm2$ \\
			& 2 & $86\times10^{10}$ &$\mathbf{22}\pm28\times10^{10}$ & 26\% &5$\pm3$ &5$\pm4$ \\
			& 3 & $117\times10^{10}$ &$\mathbf{63}\pm60\times10^{10}$ & 54\% &5$\pm3$ &10$\pm5$ \\
			& 4 & $114\times10^{10}$ &$\mathbf{49}\pm56\times10^{10}$ & 43\% &4$\pm2$ &9$\pm4$ \\
			& 5 & $33\times10^{11}$ &$\mathbf{22}\pm16\times10^{11}$ & 67\% &6$\pm3$ &29$\pm12$ \\
			& 6 & $34\times10^{11}$ &$\mathbf{12}\pm15\times10^{11}$ & 35\% &4$\pm2$ &26$\pm12$ \\
			& 7 & $215\times10^{11}$ &$\mathbf{88}\pm98\times10^{11}$ & 41\% &12$\pm8$ &48$\pm30$ \\
			& 8 & $13\pm16\times10^{13}$ &$\mathbf{10}\pm17\times10^{13}$ & 77\% &13$\pm7$ &79$\pm47$ \\
			& 9 & $116\times10^{13}$ &$\mathbf{21}\pm38\times10^{13}$ & 18\% &17$\pm14$ &830$\pm626$ \\
			& 10 & $142\times10^{13}$ &$\mathbf{26}\pm48\times10^{13}$ & 18\% &17$\pm15$ &852$\pm587$ \\
			& 11  & $73\times10^{13}$ &$\mathbf{12}\pm23\times10^{13}$ & 16\% &16$\pm12$ &518$\pm333$ \\
			& 12 & $1153\times10^{12}$ &$\mathbf{90}\pm278\times10^{12}$ & 8\% &13$\pm11$ &561$\pm373$ \\
			
			\midrule
                  \multirow{6}{*}{Matching}
                             &	1 & $84\times10^{10}$ &$\mathbf{18}\times10^{10}$ & 21\% &22$\pm0$ &102$\pm7$ \\
			& 2 & $468\times10^{10}$ &$\mathbf{15}\times10^{10}$ & 3\% &18$\pm0$&204$\pm3$ \\
			& 3 & $244\times10^{11}$ &$\mathbf{66}\times10^{11}$ & 27\% &34$\pm2$ &622$\pm43$ \\
			& 4 & $22\times10^{12}$ &$\mathbf{17}\times10^{12}$ & 77\% &66$\pm2$ &756$\pm60$ \\
			& 5 & $177\times10^{12}$ &$\mathbf{57}\times10^{12}$ & 32\% &69$\pm3$ &3500$\pm100$ \\
			& 6 & $18\times10^{13}$ &$\mathbf{16}\times10^{13}$ & 89\% &85$\pm4$ &2700$\pm300$ \\
			& 7 & $317\times10^{13}$ &$\mathbf{59}\pm1\times10^{13}$ & 19\% &356$\pm7$ &28$\pm1\times10^{4}$ \\
			& 8  & $238\times10^{13}$ &$\mathbf{34}\times10^{13}$ & 14\% &307$\pm1$ &20$\pm1\times10^{4}$ \\

		\end{tabular}
              \end{center}
	\caption{Experiments with larger graphs, with $\epsilon=0.001$. Standard deviations are omitted when they are smaller than $1\%$ of the average. Shortest paths and matching experiments were repeated 100 and 10 times, respectively.}\label{tab:big_exp}
\end{table*}

Next, we tested \algname\ on larger graphs, and compared it to the Naive baseline. As mentioned above, it was impractical to run \clucb\ on these larger problems. \tabref{big_exp} reports the results for $\epsilon = 0.001$ (the smallest value of $\epsilon$ that we tested) for all the graphs that were tested for each problem. 
The results for all values of $\epsilon$ are provided in \appref{experiments}.
It can be seen in \tabref{big_exp} that the sample size required by \algname\ is significantly smaller than that of the naive baseline. 
The run time of \algname\ is small even on large networks with a very large numbers of samples.

\begin{table*}
	\begin{center}
		\begin{tabular}{c|cc|c|ccc}
			$\epsilon$ &   \multicolumn{2}{c}{Sample size} & Sample size ratio & \multicolumn{3}{c}{\algname\ }\\
			 &  Naive & \algname\   & (\algname$/$ Naive) & Oracle calls & Time (millisec) & Accepted early\\ 
			\toprule
			
			0.50000 & $29\times10^{8}$ &$\mathbf{28}\pm4\times10^{8}$ & 97\% &48$\pm14$ &13$\pm3\times10^{2}$ & 17\%$\pm$19\%\\
			 0.25000 & $12\times10^{9}$ &$\mathbf{10}\pm2\times10^{9}$ & 83\% &46$\pm14$ &12$\pm3\times10^{2}$ & 31\%$\pm$26\% \\
			0.12500 & $47\times10^{9}$ &$\mathbf{33}\pm13\times10^{9}$ & 70\% &42$\pm14$ &11$\pm3\times10^{2}$ & 51\%$\pm$33\% \\
			0.06250 & $188\times10^{9}$ &$\mathbf{94}\pm60\times10^{9}$ & 50\% &37$\pm13$ &10$\pm3\times10^{2}$ & 66\%$\pm$32\% \\
			0.03125 & $75\times10^{10}$ &$\mathbf{19}\pm21\times10^{10}$ & 25\% &30$\pm13$ &832$\pm296$ & 85\%$\pm$28\% \\
			 0.01000 & $73\times10^{11}$ &$\mathbf{13}\pm22\times10^{11}$ & 18\% &20$\pm13$ &592$\pm295$ & 85\%$\pm$27\%\\
			 0.00500 & $294\times10^{11}$ &$\mathbf{49}\pm91\times10^{11}$ & 17\% &16$\pm12$ &494$\pm285$ & 85\%$\pm$27\%\\
			 0.00100 & $73\times10^{13}$ &$\mathbf{12}\pm23\times10^{13}$ & 16\% &16$\pm12$ &518$\pm333$& 85\%$\pm$28\% \\
		\end{tabular}
	\end{center}
	\caption{p2p-Gnutella08 network results. Each experiment was repeated 100 times.}\label{tab:11}
\end{table*}

We demonstrate the dependence on $\epsilon$ in \tabref{11}, which lists the results for all values of $\epsilon$ for the p2p-Gnutella08 network. As $\epsilon$ becomes smaller, the sample size saving by \algname\ (see the ``sample size ratio'' column) becomes more significant, up to a problem-dependent saturation point. The ``accepted early'' column indicates the fraction of the arms in the solution that were accepted early. When this ratio can no longer increase due to the problem structure, additional improvements are not possible.  

Overall, our results show that unlike previous CPE algorithms, \algname\ is practical to run on large problems, due to its light computational requirements and the independence of the number of oracle calls from the number of arm pulls. When $\epsilon$ is sufficiently small, it provides a significant reduction in the number of samples over the baseline.

\section{Discussion}\label{sec:discussion}
We propose a new algorithm for PAC-CPE. This algorithm has the advantage that it is computationally light, thus allowing it to run on large problems, while reducing the sample complexity compared to the baseline. 

\bibliography{mybib}

\appendix

\clearpage

\section{Proof of \thmref{correct}}\label{ap:correctness}
We first observe that with a high probability, throughout the run of \algname, in each iteration $j$ the empirical scores of arm sets  in $\superset_j$ (that is, arm sets that include $\accept_j$) when disregarding the scores of the arms in $\accept_j$, are close to their true scores. This is formally proved in the following lemma. The proof follows a standard uniform convergence argument, similarly to \citet{BendavidSa21}. We give below the full proof for completeness.

\begin{lemma}\label{lem:event}
	The event  $\event$ defined in \secref{analysis} holds with probability at least $1-\delta$.
\end{lemma}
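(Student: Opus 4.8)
The plan is to reduce the lemma to a single application of Hoeffding's inequality per arm per round together with a union bound --- the ``standard uniform convergence argument'' referred to above. Define the \emph{clean event} $\event_0$ to be the event that for every round $t$ of \algname\ (including the final sampling batch, which we regard as a last round with accuracy $\epsilon_{\mathrm{last}}$) and every arm $a\in\act$ at the start of round $t$, the empirical estimate $\escore_{t,a}$ of $\score_a$ satisfies $|\escore_{t,a}-\score_a|\le\epsilon_t/2$. I would then prove (i) $\Pr[\event_0]\ge 1-\delta$ and (ii) $\event_0\subseteq\event$.

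For (i), fix a round $t$ and condition on all samples drawn in rounds $1,\dots,t-1$. This fixes the active set $\act$ at the start of round $t$ and hence $N_t=N(\epsilon_t/2,\delta/(T|\act|))$; moreover, since $\act$ only shrinks across rounds, every arm active at round $t$ was active in all earlier rounds, so it has by then been pulled a cumulative $N_t$ times and $\escore_{t,a}$ is an average of $N_t$ i.i.d.\ $[0,1]$-valued draws. Hoeffding's inequality and the definition of $N(\cdot,\cdot)$ give $\Pr[|\escore_{t,a}-\score_a|>\epsilon_t/2]\le\delta/(T|\act|)$ for each such arm; a union bound over the $|\act|$ active arms bounds the failure probability of round $t$ by $\delta/T$, and a union bound over the at most $T$ rounds gives $\Pr[\neg\event_0]\le\delta$. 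The point needing care is that $N_t$ and $|\act|$ are data dependent, which is exactly why the conditioning on the earlier rounds precedes the use of Hoeffding.

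For (ii), fix $j\in[J]$ and $M\in\superset_j$, so $\accept_j\subseteq M$. I would first show that every arm $a\in M\setminus\accept_j$ is active at the start of round $t(j)$ --- hence pulled $N_{t(j)}$ times: it is not in $\accept_j$, and it was never eliminated, because for every earlier iteration $j'<j$ we have $\accept_{j'+1}\subseteq\accept_j\subseteq M$, so $a\in\bigcup_{M'\in\superset,\,\accept_{j'+1}\subseteq M'}M'$ (witnessed by $M$ itself) and therefore $a\notin\elim_n(\accept_{j'+1})$ by the defining property of $\elim_n$. On $\event_0$ this gives $|\escore_{t(j),a}-\score_a|\le\epsilon_{t(j)}/2$ for all such $a$, so by the triangle inequality
\[
\bigl|\escore_{t(j)}(M\setminus\accept_j)-\score(M\setminus\accept_j)\bigr|\le|M\setminus\accept_j|\cdot\tfrac{\epsilon_{t(j)}}{2}\le\bigl(d(\superset_j)-|\accept_j|\bigr)\tfrac{\epsilon_{t(j)}}{2},
\]
using $|M\setminus\accept_j|=|M|-|\accept_j|\le d(\superset_j)-|\accept_j|$. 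To finish, I would verify that the right-hand side equals $\theta_j/2$. This is the one bookkeeping step: inspecting \myalgref{active_algorithm}, $\theta_j$ is set to $(d(\superset_j)-|\accept_j|)\epsilon$ at initialization when $j=1$, and to $(d(\superset_j)-|\accept_j|)\epsilon_{t(j-1)}$ at the end of iteration $j-1$ (line \ref{theta_j}) otherwise, and it is then halved once at the end of every round $t$ with $t(j-1)\le t<t(j)$ (with $1\le t<t(1)$ when $j=1$); since $\epsilon_t=\epsilon_{t-1}/2$, this yields exactly $\theta_j=(d(\superset_j)-|\accept_j|)\epsilon_{t(j)}$, and the case $j=J$ (the final batch) matches $\theta_J=(d(\superset_J)-|\accept_J|)\epsilon_{\mathrm{last}}$ since that batch uses accuracy $\epsilon_{\mathrm{last}}$. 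Hence the displayed quantity is at most $\theta_j/2$, so $\event_0\subseteq\event$ and $\Pr[\event]\ge 1-\delta$.

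I expect the only genuinely delicate points to be the data dependence of $N_t$ in step (i), handled by conditioning on the earlier rounds before applying Hoeffding, and the index-shift bookkeeping for $\theta_j$ in step (ii); the remainder is routine.
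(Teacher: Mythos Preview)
Your proposal is correct and follows the same approach as the paper: define the per-arm clean event (the paper calls it $\tau$, your $\event_0$), establish it via Hoeffding plus a union bound over rounds and active arms, then deduce $\event$ by the triangle inequality over the arms in $M\setminus\accept_j$. Your write-up is more explicit than the paper's in arguing why every $a\in M\setminus\accept_j$ remains active at round $t(j)$ and in verifying the $\theta_j$ bookkeeping, but the structure is identical.
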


\begin{proof}
	Let $\act_t$ be the set $\act$ at the beginning of round $t$. Define the following uniform convergence event:
	\[
	\tau := \{\forall t \in [T],\forall a\in \act_t, |\escore_t(a)-\score_a|\leq\epsilon_{t}/2\}.
	\]
	
	In \algname, for any arm in $\act_t$, $\escore_t(a)$ is estimated based on $N_t = N(\epsilon_t/2, \delta/(T|\act_t|))$ samples. Therefore, by the definition of $N(\cdot, \cdot)$ and Hoeffding's inequality, in every round the uniform convergence requirement holds with probability at least $1-\delta/T$ (conditioned on the previous rounds). Since there are at most $T$ rounds, $\tau$ holds with probability at least $1-\delta$.
	Under $\tau$, we have that at any iteration $j$ in a round $t$, for any arm set $M \in \superset_j$, 
	\begin{align*}
	&|\score(M\setminus \accept_j) - \escore_t(M\setminus \accept_j)| \leq \sum_{a\in M\setminus \accept_j }|\score_a - \escore_t(a)|\\ &\quad\leq (d(\superset_j)-|\accept_j|)\cdot\epsilon_t/2 = \theta_j/2.
	\end{align*}
	
	This completes the proof.
\end{proof}

We now prove that under the event $\event$, if an arm is accepted by \algname, then it must be in $A^*$.

\begin{lemma}\label{lem:arm_correctness}
	Assume  that $\event$ occurs.  Then, $\accept_J\subseteq\optarms.$
\end{lemma}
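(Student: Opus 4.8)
The plan is to prove by induction on the iteration index $j$ that $\accept_j \subseteq \optarms$. The base case $j=1$ is trivial since $\accept_1 = \emptyset$. For the inductive step, suppose $\accept_j \subseteq \optarms$ and that arm $a$ is accepted at iteration $j$ (so $\accept_{j+1} = \accept_j \cup \{a\}$); I need to show $a \in \optarms$. By the induction hypothesis and the definition of $\optarms = \bigcap_{M \in \superset^*} M$, every optimal arm set contains $\accept_j$, so $\superset^*_j = \superset^* \cap \superset_j = \superset^*$, and in particular $\superset_j$ still contains all optimal arm sets and $\score^* = \max_{M \in \superset_j} \score(M)$.

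The core of the argument uses the acceptance condition $\egap{a} > \theta_j$ together with the event $\event$. First I would show the empirically optimal set $\hat{M}_t$ is close to optimal: since $\hat{M}_t = \mathsf{COracle}(\bm{\escore}_t, \accept_j, \emptyset)$ maximizes $\escore_{t(j)}$ over $\superset_j$, for any $M^* \in \superset^* \subseteq \superset_j$ we have $\escore_{t(j)}(\hat{M}_t) \geq \escore_{t(j)}(M^*)$, hence $\escore_{t(j)}(\hat{M}_t \setminus \accept_j) \geq \escore_{t(j)}(M^* \setminus \accept_j)$ (subtracting the common $\escore_{t(j)}(\accept_j)$). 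Applying $\event$ twice — once to $\hat{M}_t$ and once to $M^*$ — gives $\score(\hat{M}_t) \geq \score^* - \theta_j$. Next, for the set $\widetilde{M}^a_t = \mathsf{COracle}(\bm{\escore}_t, \accept_j, \{a\})$, which is the empirically best set in $\superset_j$ avoiding $a$, the acceptance condition gives $\escore_{t(j)}(\hat{M}_t) - \escore_{t(j)}(\widetilde{M}^a_t) > \theta_j$, i.e. $\escore_{t(j)}(\hat{M}_t \setminus \accept_j) - \escore_{t(j)}(\widetilde{M}^a_t \setminus \accept_j) > \theta_j$. Combining with $\event$ applied to both sets yields $\score(\hat{M}_t) - \score(\widetilde{M}^a_t) > 0$, so $\score(\hat{M}_t) > \score(\widetilde{M}^a_t) \geq \max_{M \in \superset_j, a \notin M} \score(M)$.

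Now I combine the two facts. We have shown $\score(\hat{M}_t) > \max\{\score(M) \mid M \in \superset_j,\ a \notin M\}$, which means $\hat{M}_t$ beats every feasible set avoiding $a$. But $\hat{M}_t \in \superset_j$, and it is not possible for $a \notin \hat{M}_t$ (otherwise $\hat{M}_t$ would be one of the sets it strictly beats), so $a \in \hat{M}_t$. Moreover, since $\score(\hat{M}_t) \geq \score^* - \theta_j$ and the acceptance threshold ensures $\theta_j$ is small relative to the gap (this is where I need to invoke the precise definition of $\theta_j$ and the relation $\egap{a} > \theta_j$ combined with $\event$ to conclude $\score(\widetilde{M}^a_t) < \score^*$), it follows that every optimal set contains $a$: if some $M^* \in \superset^*$ had $a \notin M^*$, then $M^* \in \superset_j$ with $a \notin M^*$ and $\score(M^*) = \score^* > \score(\widetilde{M}^a_t) \geq \score(M^*)$, a contradiction. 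Hence $a \in \bigcap_{M^* \in \superset^*} M^* = \optarms$, completing the induction. Since $\accept_J$ is obtained by finitely many such acceptances, $\accept_J \subseteq \optarms$.

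The main obstacle I anticipate is getting the bookkeeping of $\event$ exactly right: $\event$ controls $|\escore_{t(j)}(M \setminus \accept_j) - \score(M \setminus \accept_j)|$ by $\theta_j/2$, and each comparison between two arm sets in $\superset_j$ therefore picks up a total slack of $\theta_j$ (as in \eqref{etatheta} from the proof of \lemref{2nd_direction}). I must be careful that the two applications — deriving $\score(\hat{M}_t) \geq \score^* - \theta_j$ and deriving $\score(\widetilde{M}^a_t) < \score^*$ from $\egap{a} > \theta_j$ — chain together without double-counting, and that $\widetilde{M}^a_t \in \superset_j$ (it is, by construction of $\mathsf{COracle}$) so $\event$ genuinely applies to it. A secondary subtlety is confirming that accepting $a$ does not disturb the equality $\superset^*_j = \superset^*$ needed for the induction to carry to step $j+1$, but this is immediate once $a \in \optarms$ is established.
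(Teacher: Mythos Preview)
Your inductive framework is correct and matches the paper's, but there is a genuine gap in the core inequality. You write
\[
\score(\hat{M}_t) > \score(\widetilde{M}^a_t) \geq \max_{M \in \superset_j,\, a \notin M} \score(M),
\]
and later ``$\score(\widetilde{M}^a_t) \geq \score(M^*)$'' for an optimal $M^*$ with $a\notin M^*$. This second inequality is unjustified: $\widetilde{M}^a_t$ maximizes the \emph{empirical} score $\escore_t$ over $a$-avoiding sets in $\superset_j$, not the true score $\score$. Transferring via $\event$ only yields $\score(\widetilde{M}^a_t) \geq \score(M^*) - \theta_j$, and plugging this into $\score(\hat{M}_t) > \score(\widetilde{M}^a_t)$ gives $\score(\hat{M}_t) > \score^* - \theta_j$, which you already knew and which does not produce a contradiction. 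The slack has been spent twice.

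The fix (and this is exactly what the paper does) is to delay the passage from $\escore_t$ to $\score$. Suppose $a\notin\optarms$ and pick $M^*\in\superset^*=\superset_j^*$ with $a\notin M^*$. Use the empirical optimality of $\widetilde{M}^a_t$ directly: $\escore_t(\widetilde{M}^a_t)\geq\escore_t(M^*)$, hence
\[
\egap{a}=\escore_t(\hat{M}_t)-\escore_t(\widetilde{M}^a_t)\leq\escore_t(\hat{M}_t)-\escore_t(M^*).
\]
Now a \emph{single} application of $\event$ to the pair $\hat{M}_t,M^*\in\superset_j$ gives $\escore_t(\hat{M}_t)-\escore_t(M^*)\leq \score(\hat{M}_t)-\score(M^*)+\theta_j\leq \theta_j$, contradicting $\egap{a}>\theta_j$. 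The point is that the substitution $\widetilde{M}^a_t\to M^*$ must happen in the empirical world, where it is free, before you pay the $\theta_j$ slack from $\event$.
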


\begin{proof}
	The proof is by induction on the iteration $j$. At the beggining of iteration $j=1$, we have $\accept_1 = \emptyset$, hence $\accept_1 \subseteq \optarms$. Now, suppose that the claim holds for iteration $j$, that is, $\accept_j \subseteq \optarms$. Note that this implies that all the sets in $\superset^*$ include $\accept_{j}$, therefore,  $\superset_j^* =\superset^*$.
	
	Let $a \in [n] \setminus A^*$. Then, for some $M^*\in \superset^* = \superset^*_j$, we have $a\notin M^*$. We prove that $a$ cannot be accepted in this iteration. 
	We have $\egap{a} = \escore_t(\hat{M}_t)-\escore_t(\widetilde{M}^a_t)$. Recall that
	$\widetilde{M}^a_t = \mathsf{COracle}(\bm{\escore}_t,\accept_j,\{a\})$. Therefore, it is empirically optimal out of the arm sets that include $\accept_j$ and do not include $a$. Since $M^*$ satisfies these conditions, it follows that 
	$\escore_t(M^*)\leq\escore_t(\widetilde{M}^a_t)$. Therefore, $\egap{a} \leq \escore_t(\hat{M}_t)-\escore_t(M^*)$.  For any $M \in \superset_j$, we have $\accept_j \subseteq M$, hence $\score(M) = \score(M \setminus \accept_j) + \score(\accept_j)$, and similarly for $\escore_T$. Therefore, $\escore_t(\hat{M}_t)-\escore_t(M^*)=\escore_t(\hat{M}_t\setminus\accept_j)-\escore_t(M^*\setminus\accept_j);$
	Then, by event $\eta$, $\egap{a} \leq\score(\hat{M}_t\setminus\accept_j)-\score(M^*\setminus\accept_j)= \score(\hat{M}_t)-\score(M^*)+\theta_j \leq \theta_j$. Thus, $a$ cannot be accepted at this iteration. Therefore, the arm accepted in iteration $j$ is in $\optarms$, hence $\accept_{j+1} \subseteq \optarms$.
	This proves the claim. 
\end{proof}

We now prove the theorem.
\begin{proof}[Proof of \thmref{correct}]
	Assume that $\event$ holds. By \lemref{event}, this occurs with probability at least $1-\delta$. Under this event, for any $M \in \superset_J$, 
	\begin{align}
	&|\score(M \setminus \accept) -\escore_T(M \setminus \accept)|
	\leq\theta_J/2\nonumber\\
	&\quad= (d(\superset_J) - |\accept|)\cdot \epsilon_{\mathrm{last}}/2= \epsilon/2. \label{eq:elast}
	\end{align}

	For any $M \in \superset_J$, we have $\accept_J \subseteq M$, hence $\score(M) = \score(\accept_J) + \score(M \setminus \accept_J)$, and similarly for $\escore_T$. 
	
	By \lemref{arm_correctness}, we have that $\accept_J\subseteq\optarms$. Thus, by the definition of $\optarms$, it holds that $\superset_J^*=\superset^*$. Let $M^*$ be some arm set in $\superset^* = \superset^*_J$. Recall that also \mbox{$\hat{M} \in \superset_J$}.
	By the definition of $\hat{M}$, $\escore_T(\hat{M}) \geq \escore_T(M^*)$.
	Therefore, \mbox{$\escore_T(\hat{M} \setminus \accept_J) \geq \escore_T(M^* \setminus \accept_J)$}.
	Combining this with \eqref{elast}, it follows that
	\begin{align*}
	&\score(\hat{M} \setminus \accept_J) \geq \escore_T(\hat{M} \setminus \accept_J) - \epsilon/2 \\
	&\quad\geq \escore_T(M^* \setminus \accept_J) - \epsilon/2 \geq \score(M^* \setminus \accept_J) - \epsilon.
	\end{align*}
	Hence, $\score(\hat{M}) \geq \score(M^*) - \epsilon$, which proves the claim.
\end{proof}

\section{Full experiment results}\label{ap:experiments}
In this section, we provide the full results of all the experiments.
\figref{synthetic} illustrates the synthetic network used in the shortest-path synthetic experiments reported in \tabref{paths_synthetic}. 

Properties of all the datasets used in the experiments are reported in \tabref{net_info}. Results for the shortest path experiment with the synthetic graph in \figref{synthetic} are reported in \tabref{paths_synthetic}. Results for the matching experiment on a synthetic full graph with six nodes are reported in \tabref{matchings_synthetic}. Results for shortest path and matching on small sub-graphs of USAir97 graph are reported in \tabref{paths_small} and in \tabref{matching_small}, respectively. Results for the benchmark graphs detailed in \tabref{net_info} for shortest path are reported in Tables \ref{tab:paths_large1},\ref{tab:paths_large2}. Results for the graphs detailed in \tabref{net_info} (except for the largest graphs, 9-12) for matching are reported in Tables \ref{tab:matching_large1},\ref{tab:matching_large2}.

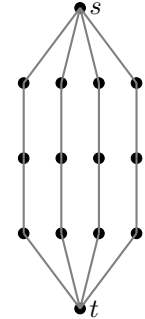
\begin{figure}[b]
	\begin{center}
		\begin{tikzpicture}
		
		\filldraw[black] (-0.25,4) circle (2pt) node[anchor=west] {$s$};
		
		\filldraw[black] (-1,3) circle (2pt) node[anchor=west] {};
		\filldraw[black] (-1,2) circle (2pt) node[anchor=west] {};
		\filldraw[black] (-1,1) circle (2pt) node[anchor=west] {};
		
		\filldraw[black] (-0.5,3) circle (2pt) node[anchor=west] {};
		\filldraw[black] (-0.5,2) circle (2pt) node[anchor=west] {};
		\filldraw[black] (-0.5,1) circle (2pt) node[anchor=west] {};
		
		\filldraw[black] (0,3) circle (2pt) node[anchor=west] {};
		\filldraw[black] (0,2) circle (2pt) node[anchor=west] {};
		\filldraw[black] (0,1) circle (2pt) node[anchor=west] {};
		
		\filldraw[black] (0.5,3) circle (2pt) node[anchor=west] {};
		\filldraw[black] (0.5,2) circle (2pt) node[anchor=west] {};
		\filldraw[black] (0.5,1) circle (2pt) node[anchor=west] {};
		
		\filldraw[black] (-0.25,0) circle (2pt) node[anchor=west] {$t$};
		
		\draw[gray, thick] (-0.25,4) -- (-1,3);
		\draw[gray, thick] (-1,3) -- (-1,2);
		\draw[gray, thick] (-1,2) -- (-1,1);
		\draw[gray, thick] (-1,1) -- (-0.25,0);
		
		\draw[gray, thick] (-0.25,4) -- (-0.5,3);
		\draw[gray, thick] (-0.5,3) -- (-0.5,2);
		\draw[gray, thick] (-0.5,2) -- (-0.5,1);
		\draw[gray, thick] (-0.5,1) -- (-0.25,0);
		
		\draw[gray, thick] (-0.25,4) -- (0,3);
		\draw[gray, thick] (0,3) -- (0,2);
		\draw[gray, thick] (0,2) -- (0,1);
		\draw[gray, thick] (0,1) -- (-0.25,0);
		
		\draw[gray, thick] (-0.25,4) -- (0.5,3);
		\draw[gray, thick] (0.5,3) -- (0.5,2);
		\draw[gray, thick] (0.5,2) -- (0.5,1);
		\draw[gray, thick] (0.5,1) -- (-0.25,0);
		\end{tikzpicture}
	\end{center}
	\caption{Synthetic graph used in the synthetic shortest-path experiment. In each experiment, the edge weights were randomly sampled out of $\{0.1,0.5,0.9\}$.}\label{fig:synthetic}
\end{figure}

\begin{table*}[h]
	\def\arraystretch{1.15}
	\begin{center}
		\resizebox{0.9\textwidth}{!}{
			\begin{tabular}{llllcc}
				Graph & No.~nodes & No.~edges & Source & Dircected? & Weighted?\\
				\toprule
				1. Southern women & 18 & 278 & \citet{DavisGaGa41}$^1$ &  \xmark& \cmark\\
				\hline
				2. Freemans EIES & 32 & 460 & \citet{Freemanfr79}$^2$ & \cmark & \cmark\\
				\hline
				3. Organisational (consult; advice) & 46 & 879 & \multirow{4}{*}{\citet{CrossPa04} $^3$} & \multirow{4}{*}{\cmark} & \multirow{4}{*}{\cmark}\\
				4. Organisational (consult; value) & 46 & 858 & & & \\
				5. Organisational (R\&D; advice) & 77 & 2228 & & & \\
				6. Organisational (R\&D; value) & 77 & 2326 & & &\\
				\hline
				7. C.elegans neural network & 306 & 2345 & \citet{WattsSt98} $^4$& \cmark& \cmark\\
				\hline
				8. USAir97 & 332 & 2100 & \citet{ColizzaPaVe07}$^5$ & \cmark & \cmark\\
				\hline
				9. p2p-Gnutella05 & 8,846 & 31,839 & \multirow{4}{*}{\shortstack[l]{\citet{RipeanuFo02},\\ \citet{LeskovecKlFa07}}$^6$} & \multirow{4}{*}{\xmark} & \multirow{4}{*}{\xmark}\\
				10. p2p-Gnutella06 & 8,717 & 31,525& & &\\
				11. p2p-Gnutella08 & 6,301 & 20,777 & & &\\
				12. p2p-Gnutella09 & 8,114 & 26,013& & &\\
				\hline
				
			\end{tabular}
		}
	\end{center}
	\captionsetup{justification=centering}
	
	\caption{Graph data sets used in our experiments.
		Urls for obtaining the data:\\
		1. \url{https://toreopsahl.com/datasets/\#southernwomen}\\
		2. \url{https://toreopsahl.com/datasets/\#FreemansEIES}\\
		3. \url{https://toreopsahl.com/datasets/\#Cross\textunderscore Parker}\\
		4. \url{https://toreopsahl.com/datasets/\#celegans}\\
		5. \url{http://networkrepository.com/USAir97.php}\\
		6. \url{https://snap.stanford.edu/data/} }\label{tab:net_info}
\end{table*}

\clearpage

\begin{table*}
	\begin{center}
		\begin{tabular}{l|lll|ll|ll}
			\toprule
			$\mathbf{\epsilon}$&
			\multicolumn{3}{c|}{Sample size}
			& \multicolumn{2}{c|}{Oracle calls} &
			\multicolumn{2}{c}{Time (millisec)}
			\\
			& Naive &
			\clucb\  & \algname\  &  
			\clucb & \algname & \clucb & \algname\\ 
			\toprule
			
			2.00000 & $\mathbf{3312}$ &$6253\pm1426$ &3872$\pm0$ &12$\pm3\times10^{3}$ &$\mathbf{11}\pm0$ &487$\pm137$ &$\mathbf{1}$ \\
			1.00000 & $\mathbf{13}\times10^{3}$ &$17\pm6\times10^{3}$ &15$\pm0\times10^{3}$ &34$\pm11\times10^{3}$ &$\mathbf{11}\pm0$ &1330$\pm472$ &$\mathbf{1}$ \\
			0.50000 & $53\times10^{3}$ &$\mathbf{40}\pm17\times10^{3}$ &60$\pm9\times10^{3}$ &80$\pm35\times10^{3}$ &$\mathbf{11}\pm1$ &3206$\pm1469$ &$\mathbf{1}$ \\
			0.25000 & $212\times10^{3}$ &$\mathbf{75}\pm39\times10^{3}$ &155$\pm100\times10^{3}$ &15$\pm8\times10^{4}$ &$\mathbf{9}\pm3$ &6064$\pm3346$ &$\mathbf{1}$ \\
			0.12500 & $85\times10^{4}$ &$\mathbf{11}\pm6\times10^{4}$ &16$\pm9\times10^{4}$ &22$\pm13\times10^{4}$ &$\mathbf{5}\pm3$ &9076$\pm5275$ &$\mathbf{1}$ \\
			0.06250 & $339\times10^{4}$ &$\mathbf{14}\pm9\times10^{4}$ &25$\pm0\times10^{4}$ &29$\pm17\times10^{4}$ &$\mathbf{2}\pm1$ &12$\pm7\times10^{3}$ &$\mathbf{0.5}$ \\
			0.03125 & $1355\times10^{4}$ &$\mathbf{16}\pm10\times10^{4}$ &99$\pm0\times10^{4}$ &33$\pm21\times10^{4}$ &$\mathbf{2}\pm1$ &13$\pm9\times10^{3}$ &$\mathbf{0.4}$ \\
			
		\end{tabular}
	\end{center}
	\caption{$s$-$t$ shortest paths synthetic graph results. Each experiment was repeated 100 times.}\label{tab:paths_synthetic}
\end{table*}

\begin{table*}
	\begin{center}
		\begin{tabular}{l|lll|ll|ll}
			\toprule
			$\mathbf{\epsilon}$&
			\multicolumn{3}{c|}{Sample size}
			& \multicolumn{2}{c}{Oracle calls} &
			\multicolumn{2}{c}{Time (millisec)}
			\\
			& Naive &
			\clucb\  & \algname\  &  
			\clucb & \algname & \clucb & \algname\\ 
			\toprule
			
			2.00000 & $\mathbf{1740}$ &$6437\pm2172$ &$2025\pm0$ &13$\pm4\times10^{3}$ &\textbf{9}$\pm0$ &4811$\pm2854$ &$\mathbf{5}\pm3$ \\
			1.00000 & $\mathbf{6915}$ &$19043\pm7317$ &$8100\pm0$ &38$\pm15\times10^{3}$ &\textbf{9}$\pm0$ &14$\pm9\times10^{3}$ &$\mathbf{5}\pm3$ \\
			0.50000 & $\mathbf{28}\times10^{3}$ &$41\pm16\times10^{3}$ &$32\pm0\times10^{3}$ &81$\pm32\times10^{3}$ &\textbf{9}$\pm0$ &31$\pm21\times10^{3}$ &$\mathbf{5}\pm3$ \\
			0.25000 & $111\times10^{3}$ &$\mathbf{79}\pm35\times10^{3}$ &104$\pm46\times10^{3}$ &16$\pm7\times10^{4}$ &$\mathbf{8}\pm2$ &61$\pm42\times10^{3}$ &$\mathbf{4}\pm3$ \\
			0.12500 & $442\times10^{3}$ &$\mathbf{124}\pm57\times10^{3}$ &65$\pm34\times10^{3}$ &25$\pm11\times10^{4}$ &$\mathbf{4}\pm1$ &94$\pm64\times10^{3}$ &$\mathbf{2}\pm1$ \\
			0.06250 & $177\times10^{4}$ &$\mathbf{16}\pm8\times10^{4}$ &$23\pm0\times10^{4}$ &32$\pm15\times10^{4}$ &\textbf{4}$\pm0$ &12$\pm9\times10^{4}$ &$\mathbf{2}\pm1$ \\
			0.03125 & $707\times10^{4}$ &$\mathbf{19}\pm9\times10^{4}$ &$92\pm0\times10^{4}$ &37$\pm18\times10^{4}$ &\textbf{4}$\pm0$ &15$\pm11\times10^{4}$ &$\mathbf{2}\pm1$ \\
			
		\end{tabular}
	\end{center}
	\caption{maximum weight matching synthetic graph results. Each experiment was repeated 100 times.}\label{tab:matchings_synthetic}
\end{table*}

\begin{table*}
	\begin{center}
		\resizebox{0.9\textwidth}{!}{
			\begin{tabular}{l|lll|ll|ll}
				\toprule
				$\mathbf{\epsilon}$&
				\multicolumn{3}{c|}{Sample size}
				& \multicolumn{2}{c}{Oracle calls} &
				\multicolumn{2}{c}{Time (millisec)}
				\\
				& Naive &
				\clucb\  & \algname\  &  
				\clucb & \algname & \clucb & \algname\\ 
				\toprule
				
				2.00000 & 4804$\pm3448$ &$\mathbf{1468}\pm1224$ &5188$\pm4581$ &2908$\pm2447$ &$\mathbf{8}\pm3$ &100$\pm91$ &$\mathbf{1}\pm0$ \\
				1.00000 & 19188$\pm13776$ &$\mathbf{6179}\pm5617$ &20735$\pm18317$ &12$\pm11\times10^{3}$ &$\mathbf{8}\pm2$ &428$\pm412$ &$\mathbf{1}\pm0$ \\
				0.50000 & 77$\pm55\times10^{3}$ &$\mathbf{24}\pm25\times10^{3}$ &83$\pm73\times10^{3}$ &48$\pm49\times10^{3}$ &$\mathbf{7}\pm2$ &1612$\pm1739$ &$\mathbf{1}\pm0$ \\
				0.25000 & 31$\pm22\times10^{4}$ &$\mathbf{11}\pm13\times10^{4}$ &33$\pm29\times10^{4}$ &22$\pm26\times10^{4}$ &$\mathbf{7}\pm2$ &7750$\pm9258$ &$\mathbf{1}\pm0$ \\
				0.12500 & 123$\pm88\times10^{4}$ &$\mathbf{45}\pm47\times10^{4}$ &133$\pm117\times10^{4}$ &90$\pm94\times10^{4}$ &$\mathbf{7}\pm2$ &31$\pm34\times10^{3}$ &$\mathbf{1}\pm0$ \\
				0.06250 & 49$\pm35\times10^{5}$ &$\mathbf{19}\pm16\times10^{5}$ &53$\pm47\times10^{5}$ &37$\pm32\times10^{5}$ &$\mathbf{7}\pm2$ &13$\pm12\times10^{4}$ &$\mathbf{1}\pm0$ \\
				0.03125 & 196$\pm141\times10^{5}$ &$\mathbf{90}\pm75\times10^{5}$ &212$\pm188\times10^{5}$ &18$\pm15\times10^{6}$ &$\mathbf{7}\pm2$ &72$\pm63\times10^{4}$ &$\mathbf{1}\pm1$ \\
				
			\end{tabular}
		}
	\end{center}
	\caption{$s$-$t$ shortest paths with sub-graphs of USAir97 of up to ten nodes. Each experiment was repeated 10 times.}\label{tab:paths_small}
\end{table*}

\begin{table*}[h]
	\begin{center}
		\resizebox{0.9\textwidth}{!}{
			\begin{tabular}{l|lll|ll|ll}
				\toprule
				$\mathbf{\epsilon}$&
				\multicolumn{3}{c|}{Sample size}
				& \multicolumn{2}{c}{Oracle calls} &
				\multicolumn{2}{c}{Time (millisec)}
				\\
				& Naive &
				\clucb\  & \algname\  &  
				\clucb & \algname & \clucb & \algname\\ 
				\toprule
				
				2.00000 & $\mathbf{215}$ &$810\pm48$ &$\mathbf{215}$ &1612$\pm96$ &\textbf{1} &216$\pm14$ &$\mathbf{0.2}$ \\
				1.00000 & $\mathbf{850}$ &$3396\pm190$ &$\mathbf{850}$ &6784$\pm379$ &\textbf{1} &901$\pm52$ &$\mathbf{0.2}$ \\
				0.50000 & $3395$ &$\mathbf{13658}\pm696$ &$3395$ &27$\pm1\times10^{3}$ &\textbf{1} &3670$\pm285$ &$\mathbf{0.2}$ \\
				0.25000 & $\mathbf{14}\times10^{3}$ &$49\pm2\times10^{3}$ &$\mathbf{14}\times10^{3}$ &97$\pm5\times10^{3}$ &\textbf{1} &14$\pm1\times10^{3}$ &$\mathbf{0.2}$ \\
				0.12500 & $\mathbf{54}\times10^{3}$ &$136\pm7\times10^{3}$ &$\mathbf{54}\times10^{3}$ &27$\pm1\times10^{4}$ &\textbf{1} &43$\pm3\times10^{3}$ &$\mathbf{0.2}$ \\
				0.06250 & $\mathbf{22}\times10^{4}$ &$30\pm2\times10^{4}$ &$\mathbf{22}\times10^{4}$ &60$\pm3\times10^{4}$ &\textbf{1} &96$\pm5\times10^{3}$ &$\mathbf{0.3}$ \\
				0.03125 & $87\times10^{4}$ &$\mathbf{52}\pm2\times10^{4}$ &$87\times10^{4}$ &10$\pm0\times10^{5}$ &\textbf{1} &17$\pm1\times10^{4}$ &$\mathbf{0.3}$ \\
			\end{tabular}
		}
	\end{center}
	\caption{Maximum weight matchings with sub-graphs of USAir97 with six nodes. Each experiment was repeated 10 times.}
	\label{tab:matching_small}
\end{table*}

\clearpage
\onecolumn
\begin{table}
	\begin{center}
		\begin{tabular}{cc|cc|c|ccc}
			\toprule
			Graph & $\epsilon$ &   \multicolumn{2}{c|}{Sample size} & Sample size ratio & \multicolumn{3}{c}{\algname\ }\\
			ID & &  Naive & \algname\   & (\algname$/$ Naive) & Oracle calls & Time (millisec) & Accepted early\\ 
			\toprule
			
			\multirow{9}{*}{1}& 1.00000 & $\mathbf{33}\times10^{4}$ &$37\times10^{4}$ & 112\% &6$\pm1$ &4$\pm0$ &0\%\\
			& 0.50000 & $\mathbf{13}\times10^{5}$ &$15\times10^{5}$ & 115\% &6$\pm1$ &4$\pm0$ &0\%\\
			& 0.25000 & $\mathbf{53}\times10^{5}$ &$59\times10^{5}$ & 111\% &6$\pm1$ &4$\pm0$ &0\%\\
			& 0.12500 & $\mathbf{21}\times10^{6}$ &$24\times10^{6}$ & 114\% &6$\pm1$ &4$\pm2$ &0\%\\
			& 0.06250 & $85\times10^{6}$ &$\mathbf{82}\pm32\times10^{6}$ & 96\% &6$\pm2$ &4$\pm1$ &15\%$\pm$36\%\\
			& 0.03125 & $34\times10^{7}$ &$\mathbf{26}\pm17\times10^{7}$ & 76\% &5$\pm2$ &3$\pm1$ &34\%$\pm$47\%\\
			& 0.01000 & $33\times10^{8}$ &$\mathbf{25}\pm16\times10^{8}$ & 76\% &5$\pm2$ &3$\pm1$ &34\%$\pm$47\%\\
			& 0.00500 & $13\times10^{9}$ &$\mathbf{10}\pm7\times10^{9}$ & 77\% &5$\pm2$ &3$\pm1$ &34\%$\pm$47\%\\
			& 0.00100 & $33\times10^{10}$ &$\mathbf{25}\pm16\times10^{10}$ & 76\% &5$\pm2$ &4$\pm2$ &34\%$\pm$47\%\\
			\midrule\multirow{9}{*}{2}& 1.00000 & $\mathbf{86}\times10^{4}$ &$98\times10^{4}$ & 114\% &12$\pm3$ &9$\pm1$ &0\%\\
			& 0.50000 & $\mathbf{34}\times10^{5}$ &$39\times10^{5}$ & 115\% &11$\pm3$ &9$\pm1$ &0\%\\
			& 0.25000 & $\mathbf{14}\times10^{6}$ &$16\times10^{6}$ & 114\% &11$\pm3$ &9$\pm1$ &0\%\\
			& 0.12500 & $\mathbf{55}\times10^{6}$ &$63\times10^{6}$ & 115\% &11$\pm3$ &9$\pm1$ &0\%\\
			& 0.06250 & $\mathbf{22}\times10^{7}$ &$25\times10^{7}$ & 114\% &11$\pm3$ &9$\pm3$ &0\%\\
			& 0.03125 & $\mathbf{88}\times10^{7}$ &100$\pm5\times10^{7}$ & 114\% &11$\pm3$ &9$\pm1$ &2\%$\pm$14\%\\
			& 0.01000 & $\mathbf{86}\times10^{8}$ &92$\pm17\times10^{8}$ & 107\% &10$\pm3$ &9$\pm2$ &9\%$\pm$25\%\\
			& 0.00500 & $34\times10^{9}$ &$\mathbf{31}\pm9\times10^{9}$ & 91\% &10$\pm3$ &8$\pm2$ &39\%$\pm$43\%\\
			& 0.00100 & $86\times10^{10}$ &$\mathbf{22}\pm28\times10^{10}$ & 26\% &5$\pm3$ &5$\pm4$ &86\%$\pm$30\%\\
			\midrule\multirow{9}{*}{3}& 1.00000 & $\mathbf{12}\times10^{5}$ &$13\times10^{5}$ & 108\% &7$\pm1$ &14$\pm2$ &0\%\\
			& 0.50000 & $\mathbf{47}\times10^{5}$ &$52\times10^{5}$ & 111\% &7$\pm1$ &14$\pm2$ &0\%\\
			& 0.25000 & $\mathbf{19}\times10^{6}$ &$21\times10^{6}$ & 111\% &7$\pm1$ &14$\pm3$ &0\%\\
			& 0.12500 & $\mathbf{75}\times10^{6}$ &76$\pm19\times10^{6}$ & 101\% &7$\pm2$ &14$\pm2$ &10\%$\pm$30\%\\
			& 0.06250 & $30\times10^{7}$ &$\mathbf{19}\pm13\times10^{7}$ & 63\% &6$\pm2$ &12$\pm4$ &55\%$\pm$49\%\\
			& 0.03125 & $120\times10^{7}$ &$\mathbf{64}\pm61\times10^{7}$ & 53\% &5$\pm3$ &11$\pm10$ &55\%$\pm$49\%\\
			& 0.01000 & $117\times10^{8}$ &$\mathbf{63}\pm60\times10^{8}$ & 54\% &5$\pm3$ &10$\pm5$ &55\%$\pm$49\%\\
			& 0.00500 & $47\times10^{9}$ &$\mathbf{25}\pm24\times10^{9}$ & 53\% &5$\pm3$ &10$\pm5$ &55\%$\pm$49\%\\
			& 0.00100 & $117\times10^{10}$ &$\mathbf{63}\pm60\times10^{10}$ & 54\% &5$\pm3$ &10$\pm5$ &55\%$\pm$49\%\\
			\midrule\multirow{9}{*}{4}& 1.00000 & $\mathbf{11}\times10^{5}$ &13$\pm1\times10^{5}$ & 118\% &6$\pm1$ &13$\pm2$ &1\%$\pm$6\%\\
			& 0.50000 & $\mathbf{46}\times10^{5}$ &50$\pm3\times10^{5}$ & 109\% &6$\pm1$ &14$\pm2$ &1\%$\pm$6\%\\
			& 0.25000 & $\mathbf{18}\times10^{6}$ &20$\pm3\times10^{6}$ & 111\% &6$\pm1$ &14$\pm2$ &4\%$\pm$18\%\\
			& 0.12500 & $73\times10^{6}$ &$\mathbf{55}\pm33\times10^{6}$ & 75\% &6$\pm2$ &12$\pm4$ &38\%$\pm$48\%\\
			& 0.06250 & $29\times10^{7}$ &$\mathbf{14}\pm13\times10^{7}$ & 48\% &5$\pm2$ &10$\pm4$ &65\%$\pm$47\%\\
			& 0.03125 & $117\times10^{7}$ &$\mathbf{51}\pm57\times10^{7}$ & 44\% &4$\pm2$ &9$\pm5$ &65\%$\pm$47\%\\
			& 0.01000 & $114\times10^{8}$ &$\mathbf{49}\pm56\times10^{8}$ & 43\% &4$\pm2$ &8$\pm4$ &65\%$\pm$47\%\\
			& 0.00500 & $46\times10^{9}$ &$\mathbf{20}\pm22\times10^{9}$ & 43\% &4$\pm2$ &8$\pm4$ &65\%$\pm$47\%\\
			& 0.00100 & $114\times10^{10}$ &$\mathbf{49}\pm56\times10^{10}$ & 43\% &4$\pm2$ &9$\pm4$ &65\%$\pm$47\%\\
			\midrule\multirow{9}{*}{5}& 1.00000 & $\mathbf{33}\times10^{5}$ &$36\times10^{5}$ & 109\% &7$\pm1$ &38$\pm4$ &0\%\\
			& 0.50000 & $\mathbf{13}\times10^{6}$ &$14\times10^{6}$ & 108\% &7$\pm1$ &38$\pm4$ &0\%\\
			& 0.25000 & $\mathbf{52}\times10^{6}$ &$57\times10^{6}$ & 110\% &7$\pm1$ &37$\pm4$ &0\%\\
			& 0.12500 & $\mathbf{21}\times10^{7}$ &23$\pm1\times10^{7}$ & 110\% &7$\pm1$ &37$\pm4$ &0.5\%$\pm$5\%\\
			& 0.06250 & $83\times10^{7}$ &$\mathbf{82}\pm24\times10^{7}$ & 99\% &7$\pm2$ &36$\pm6$ &12\%$\pm$33\%\\
			& 0.03125 & $33\times10^{8}$ &$\mathbf{23}\pm17\times10^{8}$ & 70\% &6$\pm3$ &30$\pm14$ &40\%$\pm$49\%\\
			& 0.01000 & $33\times10^{9}$ &$\mathbf{22}\pm16\times10^{9}$ & 67\% &6$\pm3$ &30$\pm11$ &40\%$\pm$49\%\\
			& 0.00500 & $130\times10^{9}$ &$\mathbf{88}\pm65\times10^{9}$ & 68\% &6$\pm3$ &30$\pm11$ &40\%$\pm$49\%\\
			& 0.00100 & $33\times10^{11}$ &$\mathbf{22}\pm16\times10^{11}$ & 67\% &6$\pm3$ &29$\pm12$ &40\%$\pm$49\%\\
			\midrule\multirow{9}{*}{6}& 1.00000 & $\mathbf{34}\times10^{5}$ &$37\times10^{5}$ & 109\% &7$\pm2$ &40$\pm5$ &0\%\\
			& 0.50000 & $\mathbf{14}\times10^{6}$ &$15\times10^{6}$ & 107\% &7$\pm1$ &40$\pm5$ &0\%\\
			& 0.25000 & $\mathbf{55}\times10^{6}$ &$60\times10^{6}$ & 109\% &7$\pm2$ &40$\pm5$ &0\%\\
			& 0.12500 & $22\times10^{7}$ &$\mathbf{21}\pm7\times10^{7}$ & 95\% &7$\pm2$ &38$\pm6$ &17\%$\pm$35\%\\
			& 0.06250 & $87\times10^{7}$ &$\mathbf{58}\pm40\times10^{7}$ & 67\% &6$\pm2$ &34$\pm11$ &47\%$\pm$49\%\\
			& 0.03125 & $35\times10^{8}$ &$\mathbf{12}\pm16\times10^{8}$ & 34\% &4$\pm2$ &26$\pm11$ &72\%$\pm$44\%\\
			& 0.01000 & $34\times10^{9}$ &$\mathbf{12}\pm15\times10^{9}$ & 35\% &4$\pm2$ &27$\pm12$ &71\%$\pm$44\%\\
			& 0.00500 & $136\times10^{9}$ &$\mathbf{49}\pm61\times10^{9}$ & 36\% &4$\pm2$ &26$\pm11$ &71\%$\pm$44\%\\
			& 0.00100 & $34\times10^{11}$ &$\mathbf{12}\pm15\times10^{11}$ & 35\% &4$\pm2$ &26$\pm12$ &71\%$\pm$44\%\\
			
		\end{tabular}
	\end{center}
	\caption{$s$-$t$ shortest paths results for graphs 1-6. Each experiment was repeated 100 times. Standard deviations are omitted when they are smaller than $1\%$ of the average. }\label{tab:paths_large1}
\end{table}

\begin{table}
	\begin{center}
		\resizebox{0.9\textwidth}{!}{
			\begin{tabular}{cc|cc|c|ccc}
				\toprule
				Graph & $\epsilon$ &   \multicolumn{2}{c|}{Sample size} & Sample size ratio & \multicolumn{3}{c}{\algname\ }\\
				ID & &  Naive & \algname\   & (\algname$/$ Naive) & Oracle calls & Time (millisec) & Accepted early\\ 
				\toprule
				
				\multirow{9}{*}{7}& 1.00000 & $\mathbf{21}\times10^{6}$ &23$\pm4\times10^{6}$ & 110\% &25$\pm8$ &79$\pm14$ &7\%$\pm$16\%\\
				& 0.50000 & $\mathbf{86}\times10^{6}$ &91$\pm17\times10^{6}$ & 106\% &22$\pm7$ &77$\pm14$ &7\%$\pm$16\%\\
				& 0.25000 & $\mathbf{34}\times10^{7}$ &36$\pm7\times10^{7}$ & 106\% &22$\pm7$ &76$\pm13$ &6\%$\pm$16\%\\
				& 0.12500 & $14\times10^{8}$ &$\mathbf{14}\pm3\times10^{8}$ & 100\% &22$\pm7$ &76$\pm13$ &7\%$\pm$16\%\\
				& 0.06250 & $\mathbf{55}\times10^{8}$ &57$\pm11\times10^{8}$ & 104\% &22$\pm7$ &77$\pm18$ &8\%$\pm$17\%\\
				& 0.03125 & $22\times10^{9}$ &$\mathbf{22}\pm5\times10^{9}$ & 100\% &21$\pm7$ &75$\pm13$ &11\%$\pm$20\%\\
				& 0.01000 & $21\times10^{10}$ &$\mathbf{19}\pm7\times10^{10}$ & 90\% &20$\pm7$ &74$\pm24$ &23\%$\pm$32\%\\
				& 0.00500 & $86\times10^{10}$ &$\mathbf{58}\pm28\times10^{10}$ & 67\% &19$\pm7$ &66$\pm16$ &63\%$\pm$39\%\\
				& 0.00100 & $215\times10^{11}$ &$\mathbf{88}\pm98\times10^{11}$ & 41\% &12$\pm8$ &48$\pm30$ &64\%$\pm$40\%\\
				\midrule\multirow{11}{*}{8}& 1.00000 & $\mathbf{13}\pm16\times10^{7}$ &14$\pm19\times10^{7}$ & 108\% &23$\pm13$ &80$\pm36$ &17\%$\pm$23\%\\
				& 0.50000 & $\mathbf{52}\pm65\times10^{7}$ &58$\pm74\times10^{7}$ & 112\% &20$\pm11$ &78$\pm33$ &18\%$\pm$23\%\\
				& 0.25000 & $\mathbf{21}\pm26\times10^{8}$ &23$\pm30\times10^{8}$ & 110\% &20$\pm11$ &77$\pm33$ &19\%$\pm$23\%\\
				& 0.12500 & $\mathbf{83}\pm104\times10^{8}$ &92$\pm119\times10^{8}$ & 111\% &18$\pm10$ &77$\pm33$ &19\%$\pm$24\%\\
				& 0.06250 & $\mathbf{33}\pm42\times10^{9}$ &37$\pm48\times10^{9}$ & 112\% &17$\pm9$ &75$\pm30$ &20\%$\pm$24\%\\
				& 0.03125 & $\mathbf{13}\pm17\times10^{10}$ &15$\pm19\times10^{10}$ & 115\% &16$\pm8$ &75$\pm32$ &22\%$\pm$27\%\\
				& 0.01000 & $\mathbf{13}\pm16\times10^{11}$ &14$\pm19\times10^{11}$ & 108\% &16$\pm8$ &74$\pm33$ &28\%$\pm$30\%\\
				& 0.00500 & $\mathbf{52}\pm65\times10^{11}$ &54$\pm73\times10^{11}$ & 104\% &15$\pm8$ &75$\pm38$ &33\%$\pm$32\%\\
				& 0.00100 & $13\pm16\times10^{13}$ &$\mathbf{10}\pm17\times10^{13}$ & 77\% &13$\pm7$ &79$\pm47$ &55\%$\pm$35\%\\
				& 1.00$\times10^{-4}$ & $129\pm163\times10^{14}$ &$\mathbf{52}\pm115\times10^{14}$ & 40\% &9$\pm6$ &69$\pm44$ &75\%$\pm$33\%\\
				& 0.10$\times10^{-4}$ & $129\pm163\times10^{16}$ &$\mathbf{19}\pm56\times10^{16}$ & 15\% &5$\pm4$ &46$\pm32$ &85\%$\pm$28\%\\
				\midrule\multirow{9}{*}{9}& 1.00000 & $12\times10^{8}$ &$\mathbf{12}\pm2\times10^{8}$ & 100\% &50$\pm15$ &2033$\pm521$ &9\%$\pm$15\%\\
				& 0.50000 & $46\times10^{8}$ &$\mathbf{46}\pm7\times10^{8}$ & 100\% &50$\pm15$ &2015$\pm535$ &15\%$\pm$19\%\\
				& 0.25000 & $19\times10^{9}$ &$\mathbf{17}\pm4\times10^{9}$ & 89\% &48$\pm15$ &1893$\pm486$ &28\%$\pm$28\%\\
				& 0.12500 & $74\times10^{9}$ &$\mathbf{52}\pm23\times10^{9}$ & 70\% &44$\pm16$ &1770$\pm494$ &53\%$\pm$35\%\\
				& 0.06250 & $30\times10^{10}$ &$\mathbf{14}\pm10\times10^{10}$ & 47\% &38$\pm14$ &1554$\pm473$ &71\%$\pm$33\%\\
				& 0.03125 & $119\times10^{10}$ &$\mathbf{31}\pm35\times10^{10}$ & 26\% &31$\pm15$ &1278$\pm487$ &84\%$\pm$30\%\\
				& 0.01000 & $116\times10^{11}$ &$\mathbf{22}\pm38\times10^{11}$ & 19\% &21$\pm15$ &904$\pm514$ &84\%$\pm$31\%\\
				& 0.00500 & $464\times10^{11}$ &$\mathbf{82}\pm153\times10^{11}$ & 18\% &17$\pm15$ &781$\pm520$ &84\%$\pm$29\%\\
				& 0.00100 & $116\times10^{13}$ &$\mathbf{21}\pm38\times10^{13}$ & 18\% &17$\pm14$ &830$\pm626$ &84\%$\pm$29\%\\
				\midrule\multirow{9}{*}{10}& 1.00000 & $\mathbf{14}\times10^{8}$ &15$\pm2\times10^{8}$ & 107\% &53$\pm14$ &2150$\pm553$ &8\%$\pm$11\%\\
				& 0.50000 & $57\times10^{8}$ &$\mathbf{56}\pm10\times10^{8}$ & 98\% &52$\pm14$ &2145$\pm556$ &13\%$\pm$19\%\\
				& 0.25000 & $23\times10^{9}$ &$\mathbf{20}\pm5\times10^{9}$ & 87\% &51$\pm14$ &2084$\pm528$ &27\%$\pm$30\%\\
				& 0.12500 & $91\times10^{9}$ &$\mathbf{67}\pm26\times10^{9}$ & 74\% &48$\pm15$ &1941$\pm512$ &51\%$\pm$38\%\\
				& 0.06250 & $36\times10^{10}$ &$\mathbf{18}\pm13\times10^{10}$ & 50\% &42$\pm14$ &1719$\pm493$ &72\%$\pm$36\%\\
				& 0.03125 & $145\times10^{10}$ &$\mathbf{44}\pm46\times10^{10}$ & 30\% &35$\pm14$ &1436$\pm493$ &84\%$\pm$29\%\\
				& 0.01000 & $142\times10^{11}$ &$\mathbf{28}\pm47\times10^{11}$ & 20\% &22$\pm15$ &1009$\pm514$ &85\%$\pm$29\%\\
				& 0.00500 & $57\times10^{12}$ &$\mathbf{10}\pm19\times10^{12}$ & 18\% &18$\pm15$ &844$\pm501$ &85\%$\pm$29\%\\
				& 0.00100 & $142\times10^{13}$ &$\mathbf{26}\pm48\times10^{13}$ & 18\% &17$\pm15$ &852$\pm587$ &85\%$\pm$29\%\\
				\midrule\multirow{9}{*}{11}& 1.00000 & $\mathbf{73}\times10^{7}$ &75$\pm9\times10^{7}$ & 103\% &48$\pm14$ &1299$\pm336$ &10\%$\pm$12\%\\
				& 0.50000 & $29\times10^{8}$ &$\mathbf{28}\pm4\times10^{8}$ & 97\% &48$\pm14$ &1272$\pm334$ &17\%$\pm$19\%\\
				& 0.25000 & $12\times10^{9}$ &$\mathbf{10}\pm2\times10^{9}$ & 83\% &46$\pm14$ &1219$\pm332$ &31\%$\pm$26\%\\
				& 0.12500 & $47\times10^{9}$ &$\mathbf{33}\pm13\times10^{9}$ & 70\% &42$\pm14$ &1130$\pm319$ &51\%$\pm$33\%\\
				& 0.06250 & $188\times10^{9}$ &$\mathbf{94}\pm60\times10^{9}$ & 50\% &37$\pm13$ &1005$\pm304$ &66\%$\pm$32\%\\
				& 0.03125 & $75\times10^{10}$ &$\mathbf{19}\pm21\times10^{10}$ & 25\% &30$\pm13$ &832$\pm296$ &85\%$\pm$28\%\\
				& 0.01000 & $73\times10^{11}$ &$\mathbf{13}\pm22\times10^{11}$ & 18\% &20$\pm13$ &592$\pm295$ &85\%$\pm$27\%\\
				& 0.00500 & $294\times10^{11}$ &$\mathbf{49}\pm91\times10^{11}$ & 17\% &16$\pm12$ &494$\pm285$ &85\%$\pm$27\%\\
				& 0.00100 & $73\times10^{13}$ &$\mathbf{12}\pm23\times10^{13}$ & 16\% &16$\pm12$ &518$\pm333$ &85\%$\pm$28\%\\
				\midrule\multirow{9}{*}{12}& 1.00000 & $12\times10^{8}$ &$\mathbf{12}\pm1\times10^{8}$ & 100\% &50$\pm15$ &1759$\pm477$ &10\%$\pm$17\%\\
				& 0.50000 & $46\times10^{8}$ &$\mathbf{44}\pm9\times10^{8}$ & 96\% &50$\pm15$ &1722$\pm473$ &20\%$\pm$25\%\\
				& 0.25000 & $18\times10^{9}$ &$\mathbf{15}\pm5\times10^{9}$ & 83\% &48$\pm16$ &1655$\pm488$ &47\%$\pm$37\%\\
				& 0.12500 & $74\times10^{9}$ &$\mathbf{43}\pm23\times10^{9}$ & 58\% &43$\pm16$ &1488$\pm457$ &65\%$\pm$35\%\\
				& 0.06250 & $30\times10^{10}$ &$\mathbf{11}\pm10\times10^{10}$ & 37\% &36$\pm15$ &1275$\pm447$ &85\%$\pm$28\%\\
				& 0.03125 & $118\times10^{10}$ &$\mathbf{19}\pm29\times10^{10}$ & 16\% &28$\pm15$ &1011$\pm434$ &93\%$\pm$22\%\\
				& 0.01000 & $115\times10^{11}$ &$\mathbf{10}\pm28\times10^{11}$ & 9\% &17$\pm13$ &692$\pm385$ &93\%$\pm$21\%\\
				& 0.00500 & $461\times10^{11}$ &$\mathbf{40}\pm115\times10^{11}$ & 9\% &13$\pm11$ &577$\pm360$ &93\%$\pm$21\%\\
				& 0.00100 & $1153\times10^{12}$ &$\mathbf{90}\pm278\times10^{12}$ & 8\% &13$\pm11$ &561$\pm373$ &93\%$\pm$22\%\\
			\end{tabular}
		}
	\end{center}
	\caption{$s$-$t$ shortest paths results for graphs 7-12. Each experiment was repeated 100 times. Standard deviations are omitted when they are smaller than $1\%$ of the average.}\label{tab:paths_large2}
\end{table}

\begin{table}[h]
	\begin{center}
		\resizebox{0.9\textwidth}{!}{
			\begin{tabular}{cc|cc|c|ccc}
				\toprule
				Graph & $\epsilon$ &   \multicolumn{2}{c|}{Sample size} & Sample size ratio & \multicolumn{3}{c}{\algname\ }\\
				ID & &  Naive & \algname\   & (\algname$/$ Naive) & Oracle calls & Time (millisec) & Accepted early\\ 
				\toprule

				\multirow{8}{*}{1} & 0.50000 & $\mathbf{34}\times10^{5}$ &$37\times10^{5}$ & 109\% &37 &250$\pm8$ &0\%\\
				& 0.25000 & $\mathbf{13}\times10^{6}$ &$15\times10^{6}$ & 115\% &36 &246$\pm7$ &0\%\\
				& 0.12500 & $54\times10^{6}$ &$\mathbf{47}\times10^{6}$ & 87\% &36$\pm1$ &245$\pm6$ &11\%\\
				& 0.06250 & $21\times10^{7}$ &$\mathbf{16}\times10^{7}$ & 76\% &27$\pm1$ &185$\pm4$ &11\%\\
				& 0.03125 & $86\times10^{7}$ &$\mathbf{26}\pm4\times10^{7}$ & 30\% &26 &162$\pm5$ &32\%$\pm$3\%\\
				& 0.01000 & $84\times10^{8}$ &$\mathbf{18}\times10^{8}$ & 21\% &22 &103$\pm3$ &33\%\\
				& 0.00500 & $336\times10^{8}$ &$\mathbf{73}\pm1\times10^{8}$ & 22\% &22 &106$\pm4$ &33\%\\
				& 0.00100 & $84\times10^{10}$ &$\mathbf{18}\times10^{10}$ & 21\% &22 &102$\pm7$ &33\%\\
				\midrule\multirow{8}{*}{2} & 0.50000 & $\mathbf{19}\times10^{6}$ &$21\times10^{6}$ & 111\% &37$\pm1$ &635$\pm19$ &0\%\\
				& 0.25000 & $\mathbf{75}\times10^{6}$ &$83\times10^{6}$ & 111\% &37 &641$\pm11$ &0\%\\
				& 0.12500 & $\mathbf{30}\times10^{7}$ &$33\times10^{7}$ & 110\% &37 &648$\pm8$ &0\%\\
				& 0.06250 & $\mathbf{12}\times10^{8}$ &$13\times10^{8}$ & 108\% &37 &644$\pm5$ &0\%\\
				& 0.03125 & $48\times10^{8}$ &$\mathbf{34}\times10^{8}$ & 71\% &37 &643$\pm7$ &7\%\\
				& 0.01000 & $47\times10^{9}$ &$\mathbf{12}\times10^{9}$ & 26\% &36 &501$\pm6$ &27\%\\
				& 0.00500 & $187\times10^{9}$ &$\mathbf{24}\times10^{9}$ & 13\% &33 &407$\pm4$ &47\%\\
				& 0.00100 & $468\times10^{10}$ &$\mathbf{15}\times10^{10}$ & 3\% &18 &204$\pm3$ &53\%\\
				\midrule\multirow{8}{*}{3} & 0.50000 & $\mathbf{97}\times10^{6}$ &$110\times10^{6}$ & 113\% &63$\pm3$ &1684$\pm89$ &0\%\\
				& 0.25000 & $\mathbf{39}\times10^{7}$ &$44\times10^{7}$ & 113\% &64$\pm3$ &1716$\pm92$ &0\%\\
				& 0.12500 & $\mathbf{16}\times10^{8}$ &$18\times10^{8}$ & 112\% &62$\pm2$ &1703$\pm55$ &0\%\\
				& 0.06250 & $62\times10^{8}$ &$\mathbf{36}\times10^{8}$ & 58\% &63$\pm2$ &1688$\pm72$ &26\%\\
				& 0.03125 & $249\times10^{8}$ &$\mathbf{90}\times10^{8}$ & 36\% &59$\pm3$ &1498$\pm62$ &30\%\\
				& 0.01000 & $244\times10^{9}$ &$\mathbf{67}\times10^{9}$ & 27\% &40$\pm2$ &866$\pm40$ &30\%\\
				& 0.00500 & $97\times10^{10}$ &$\mathbf{26}\times10^{10}$ & 27\% &34$\pm1$ &637$\pm33$ &30\%\\
				& 0.00100 & $244\times10^{11}$ &$\mathbf{66}\times10^{11}$ & 27\% &34$\pm2$ &622$\pm43$ &30\%\\
				\midrule\multirow{8}{*}{4} & 0.50000 & $\mathbf{89}\times10^{6}$ &$100\times10^{6}$ & 112\% &72$\pm1$ &482$\pm9$ &0\%\\
				& 0.25000 & $36\times10^{7}$ &$\mathbf{36}\times10^{7}$ & 100\% &72$\pm1$ &485$\pm7$ &4\%\\
				& 0.12500 & $\mathbf{14}\times10^{8}$ &$15\times10^{8}$ & 107\% &71$\pm1$ &574$\pm12$ &4\%\\
				& 0.06250 & $57\times10^{8}$ &$\mathbf{46}\times10^{8}$ & 81\% &69$\pm2$ &671$\pm17$ &9\%\\
				& 0.03125 & $23\times10^{9}$ &$\mathbf{18}\times10^{9}$ & 78\% &66$\pm2$ &687$\pm32$ &9\%\\
				& 0.01000 & $\mathbf{22}\times10^{10}$ &$23\times10^{10}$ & 105\% &64$\pm3$ &676$\pm33$ &4\%\\
				& 0.00500 & $89\times10^{10}$ &$\mathbf{68}\times10^{10}$ & 76\% &65$\pm2$ &695$\pm22$ &9\%\\
				& 0.00100 & $22\times10^{12}$ &$\mathbf{17}\times10^{12}$ & 77\% &66$\pm2$ &756$\pm60$ &9\%\\
				\midrule\multirow{8}{*}{5} & 0.50000 & $\mathbf{71}\times10^{7}$ &$80\times10^{7}$ & 113\% &140$\pm4$ &10$\pm0\times10^{3}$ &0\%\\
				& 0.25000 & $\mathbf{28}\times10^{8}$ &$32\times10^{8}$ & 114\% &141$\pm3$ &10$\pm0\times10^{3}$ &0\%\\
				& 0.12500 & $\mathbf{11}\times10^{9}$ &$12\times10^{9}$ & 109\% &140$\pm3$ &10$\pm0\times10^{3}$ &3\%\\
				& 0.06250 & $45\times10^{9}$ &$\mathbf{35}\times10^{9}$ & 78\% &137$\pm4$ &10$\pm0\times10^{3}$ &29\%\\
				& 0.03125 & $181\times10^{9}$ &$\mathbf{72}\times10^{9}$ & 40\% &115$\pm3$ &8448$\pm199$ &29\%\\
				& 0.01000 & $177\times10^{10}$ &$\mathbf{57}\times10^{10}$ & 32\% &91$\pm2$ &6236$\pm174$ &29\%\\
				& 0.00500 & $71\times10^{11}$ &$\mathbf{23}\times10^{11}$ & 32\% &81$\pm3$ &4891$\pm178$ &29\%\\
				& 0.00100 & $177\times10^{12}$ &$\mathbf{57}\times10^{12}$ & 32\% &69$\pm3$ &3541$\pm126$ &29\%\\
				\midrule\multirow{8}{*}{6} & 0.50000 & $\mathbf{71}\times10^{7}$ &$80\times10^{7}$ & 113\% &104$\pm3$ &3114$\pm219$ &0\%\\
				& 0.25000 & $\mathbf{28}\times10^{8}$ &$32\times10^{8}$ & 114\% &104$\pm3$ &2984$\pm117$ &0\%\\
				& 0.12500 & $\mathbf{11}\times10^{9}$ &$13\times10^{9}$ & 118\% &102$\pm5$ &3014$\pm203$ &0\%\\
				& 0.06250 & $45\times10^{9}$ &$\mathbf{45}\times10^{9}$ & 100\% &103$\pm4$ &3026$\pm168$ &5\%\\
				& 0.03125 & $18\times10^{10}$ &$\mathbf{17}\times10^{10}$ & 94\% &100$\pm5$ &3114$\pm173$ &5\%\\
				& 0.01000 & $18\times10^{11}$ &$\mathbf{16}\times10^{11}$ & 89\% &92$\pm5$ &3041$\pm146$ &5\%\\
				& 0.00500 & $71\times10^{11}$ &$\mathbf{66}\times10^{11}$ & 93\% &88$\pm4$ &3011$\pm120$ &5\%\\
				& 0.00100 & $18\times10^{13}$ &$\mathbf{16}\times10^{13}$ & 89\% &85$\pm4$ &2674$\pm287$ &5\%\\
			\end{tabular}
		}
		
	\end{center}
	\caption{Maximum weight matching results for graphs 1-6. Each experiment was repeated 10 times. Standard deviations are omitted when they are smaller than $1\%$ of the average.}
	\label{tab:matching_large1}
\end{table} 

\begin{table}[h]
	\begin{center}
		\begin{tabular}{cc|cc|c|ccc}
			\toprule
			Graph & $\epsilon$ &   \multicolumn{2}{c|}{Sample size} & Sample size ratio & \multicolumn{3}{c}{\algname\ }\\
			ID & &  Naive & \algname\   & (\algname$/$ Naive) & Oracle calls & Time (millisec) & Accepted early\\ 
			\toprule

			\multirow{8}{*}{7} & 0.50000 & $\mathbf{13}\times10^{9}$ &$15\times10^{9}$ & 115\% &492$\pm5$ &51$\pm1\times10^{4}$ &0\%\\
			& 0.25000 & $\mathbf{51}\times10^{9}$ &$60\times10^{9}$ & 118\% &498$\pm5$ &55$\pm4\times10^{4}$ &0\%\\
			& 0.12500 & $\mathbf{20}\times10^{10}$ &$23\times10^{10}$ & 115\% &497$\pm9$ &53$\pm1\times10^{4}$ &1\%\\
			& 0.06250 & $81\times10^{10}$ &$\mathbf{81}\times10^{10}$ & 100\% &498$\pm9$ &52$\pm1\times10^{4}$ &2\%\\
			& 0.03125 & $32\times10^{11}$ &$\mathbf{29}\times10^{11}$ & 91\% &497$\pm6$ &52$\pm1\times10^{4}$ &5\%\\
			& 0.01000 & $32\times10^{12}$ &$\mathbf{16}\times10^{12}$ & 50\% &479$\pm6$ &48$\pm1\times10^{4}$ &22\%\\
			& 0.00500 & $127\times10^{12}$ &$\mathbf{34}\times10^{12}$ & 27\% &453$\pm6$ &43$\pm1\times10^{4}$ &38\%\\
			& 0.00100 & $317\times10^{13}$ &$\mathbf{59}\pm1\times10^{13}$ & 19\% &356$\pm7$ &28$\pm1\times10^{4}$ &38\%$\pm$1\%\\
			\midrule\multirow{8}{*}{8} & 0.50000 & $\mathbf{95}\times10^{8}$ &$113\times10^{8}$ & 119\% &414$\pm6$ &33$\pm1\times10^{4}$ &0\%\\
			& 0.25000 & $\mathbf{38}\times10^{9}$ &$44\times10^{9}$ & 116\% &413$\pm3$ &34$\pm1\times10^{4}$ &1\%\\
			& 0.12500 & $\mathbf{15}\times10^{10}$ &$18\times10^{10}$ & 120\% &414$\pm2$ &33$\pm0\times10^{4}$ &1\%\\
			& 0.06250 & $\mathbf{61}\times10^{10}$ &$70\times10^{10}$ & 115\% &413$\pm3$ &34$\pm1\times10^{4}$ &1\%\\
			& 0.03125 & $\mathbf{24}\times10^{11}$ &$25\times10^{11}$ & 104\% &414$\pm2$ &33$\pm0\times10^{4}$ &3\%\\
			& 0.01000 & $24\times10^{12}$ &$\mathbf{16}\times10^{12}$ & 67\% &406$\pm2$ &31$\pm0\times10^{4}$ &14\%\\
			& 0.00500 & $95\times10^{12}$ &$\mathbf{38}\times10^{12}$ & 40\% &391$\pm1$ &30$\pm1\times10^{4}$ &27\%\\
			& 0.00100 & $238\times10^{13}$ &$\mathbf{34}\times10^{13}$ & 14\% &307$\pm1$ &20$\pm1\times10^{4}$ &47\%\\

		\end{tabular}
		
	\end{center}
	\caption{Maximum weight matching results for graphs 7-8. Each experiment was repeated 10 times. Standard deviations are omitted when they are smaller than $1\%$ of the average.}
	\label{tab:matching_large2}
\end{table}

\end{document}